\newtheorem{theorem}{Theorem}
\newtheorem{corollary}{Corollary}
\newtheorem{lemma}{Lemma}
\newcommand{\expect}[1]{\mathord{\mathrm{E}}\mathord{\left(#1\right)}}
\newcommand{\filtzero}{\mathcal{F}_0}
\newtheorem{openproblem}{Open Problem}
\newcommand*{\om}{\textsc{OneMax}\xspace}
\newcommand*{\OneMax}{\om}
\newcommand*{\onemax}{\om}
\newcommand*{\jump}{\textsc{Jump}\xspace}
\newcommand{\cga}{cGA\xspace}
\newcommand{\cGA}{\cga}
\newcommand*{\Var}{\mathrm{Var}}
\newcommand{\ie}{i.\,e.\xspace}
\newcommand{\eg}{e.\,g.\xspace}
\newcommand{\card}[1]{\left|#1\right|}
\newcommand{\N}{\mathds{N}}
\newcommand{\R}{\mathds{R}}
\newcommand{\filt}{\mathcal{F}}
\newcommand{\filtt}{\mathcal{F}_t}
\renewcommand{\epsilon}{\varepsilon}
\newcommand{\prob}[1]{\mathord{\ensuremath{\mathrm{Pr}}}\mathord{\left(#1\right)}}         
\newcommand{\cliff}{\textsc{Cliff}\xspace}
\newcommand{\ones}[1]{\lvert #1\rvert_1}
\newcommand{\wellbehaved}{\mathcal{K}}
\newenvironment{proofof}[1]{\begin{proof}[Proof of~#1]}{\end{proof}}
\newcommand{\vl}{v_{\mathrm{\ell}}}
\newcommand{\vu}{v_{\mathrm{u}}}
\newcommand{\toggleplot}[1]{{\textcolor{red}{Plots removed to increase compilation speed. Use command $\backslash$toggleplot in preamble to reinsert them.}}}
\begin{document}

\author{Frank Neumann
\\Optimisation and Logistics\\
School of Computer Science
\\The University of Adelaide\\Adelaide, Australia
\And
Dirk Sudholt\\
Faculty of Computer Science and Mathematics\\University of Passau\\
Passau, Germany
\And
Carsten Witt\\
DTU Compute\\Technical University of Denmark\\
Kongens Lyngby, Denmark
}

 \
\title{The Compact Genetic Algorithm Struggles on Cliff Functions}

\maketitle

\begin{abstract}
  The compact genetic algorithm (cGA) is a non-elitist estimation of distribution algorithm which has shown to be able to deal with difficult multimodal fitness landscapes that are hard to solve by elitist algorithms. In this paper, we investigate the cGA on the \cliff function for which it has been shown recently that non-elitist evolutionary algorithms and artificial immune systems optimize it in expected polynomial time. We point out that the cGA faces major difficulties when solving the \cliff function and investigate its dynamics both experimentally and theoretically around the \cliff.
  Our experimental results indicate that the cGA requires exponential time for all values of the update strength~$K$. We show theoretically that, under sensible assumptions, there is a negative drift when sampling around the location of the cliff. Experiments further suggest that there is a phase transition for~$K$ where the expected optimization time drops from $n^{\Theta(n)}$ to $2^{\Theta(n)}$.
\end{abstract}

\keywords{Estimation-of-distribution algorithms, compact genetic algorithm, evolutionary algorithms, running time analysis, theory.}

\section{Introduction}
Runtime analysis of evolutionary algorithms and other randomized search heuristics has provided a deep understanding of many working principles of these algorithms~\cite{Jansen13,DoerrN20}.
The goal of these studies is to provide rigorous results of randomized search heuristics by analyzing them as a special class of randomized algorithms. This allows to use a wide range of tools such as concentration bounds and random walk arguments. A wide range of new methods for analyzing randomized search heuristics have been developed over the last 20 years. Starting with methods such as fitness based partitions for simple problems and elitist algorithms, more complex combinatorial optimization problem~\cite{NeumannW10} (including NP-hard ones) and non elitist algorithms have been investigated.

Estimation of distribution algorithms (EDAs)~\cite{PelikanHandbook15} are a special class of randomized search heuristics that work with a probability distribution at each stage of the algorithm (instead of a set of solutions). This probability distribution is updated by reinforcing components that have shown to lead to solutions of good quality. EDAs have found a wide range of applications to problems such as military antenna design, multiobjective knapsack, and  quadratic assignment (see \cite{HauschildPelikan11} for an introduction and overview).

The theoretical runtime analysis concentrates on simple EDAs that capture their basic algorithmic properties~\cite{krejca2019theory}. The compact genetic algorithm (cGA) is such a simple EDA which has been studied in different runtime analyses.
Following the seminal work by Droste~\cite{Droste2006a} for the cGA in the mid 2000s, there has been a growing interest in studying the cGA and other EDAs  over the last 8 years~\cite{DangLehreGECCO15, WittGECCO17,LehreNguyenGECCO17}. We refer the reader to \citet{krejca2019theory} for a recent survey.
These theoretical results focus on the working principles of the considered EDAs and especially discuss their difference to simple evolutionary algorithms such as the (1+1)~EA.
Several studies have shown that the update strength~$K$, which determines the magnitude of changes to the probabilistic model, has a crucial impact on performance~\cite{SudholtWittAlgo19,DoerrZhengIEEETEC20,LenglerSudholtWittAlgo21}. In~\cite{LenglerSudholtWittAlgo21} it was shown that the \cga optimizes \onemax efficiently, in expected time $O(\sqrt{n}K)$, if the update strength is sufficiently large, i.\,e.\ $K = \Omega(\sqrt{n} \log n)$. For $K = \Theta(\sqrt{n}\log n)$ this yields an upper bound of $O(n \log n)$ function evaluations.
In \cite{Lengler2018,LenglerSudholtWittAlgo21}, the authors showed that for smaller values of~$K$ in $\Omega(\log^3 n)$ and $O(\sqrt{n}/(\log(n) \log \log n))$, the expected optimization time on \onemax is $\Omega(K^{1/3} n)$ in expectation and with high probability. Thus, in this so-called medium parameter regime the expected optimization time increases with $K$ before dropping down to $O(n \log n)$ for $K \ge \Omega(\sqrt{n}\log n)$.

Other studies have unveiled remarkable advantages of EDAs. Their ability to learn good solution components, coupled with a slow adaptation of the probabilistic model, makes EDAs higly robust with respect to noisy fitness evaluations~\cite{Friedrich2017}.
Furthermore, their ability to sample with a large sampling variance implies that they are good at exploring the search space.
This has been shown rigorously for the $\jump$ function, a multimodal function of unitation (i.\,e.\ the fitness only depends on the number of ones) where evolutionary algorithms typically need to make a large jump. With the right choice of the update strength, the \cga is able to optimize \jump efficiently, if the size of the jump is not too large~\cite{DoerrJumpAlgo21, WittFOGA21, HasenohrlS18}.

In this work we consider the runtime of the \cga on a multimodal function.
\cliff is a function of unitation with the difficulty that inferior solutions need to be accepted in order to advance towards the global optimum (unless the algorithm jumps to the optimum directly). In other words, algorithms need to be able to ``jump down'' a cliff in the fitness landscape (see Section~\ref{sec:preliminaries} for a definition and Figure~\ref{fig:cliff} for an illustration of \cliff).
It was originally proposed by~\citet{JaegerskuepperFOCI07} to show the advantages of non-elitism in evolutionary algorithms. They showed that a simple (1,$\lambda$)~EA that generates $\lambda$ offspring independently and picks the best offspring to replace the parent optimizes \cliff in expected $O(n^{25})$ evaluations. \Citet{HeviaFajardoSudholtFOGA21} showed that this time is in fact in $O(n^\eta \log^2 n)$ and $\omega(n^{\eta-\varepsilon})$ for every constant $\varepsilon > 0$, where $\eta \approx 3.976770136$.

The same paper~\cite{HeviaFajardoSudholtFOGA21} also showed that a (1,$\lambda$)~EA with a self-adjusting offspring population size~$\lambda$ can optimize \cliff in expected $O(n)$ generations and $O(n \log n)$ expected function evaluations.
The same time bound $O(n \log n)$ was shown earlier for other non-elitist algorithms: hyperheuristics that have a certain probability of accepting every offspring~\cite{LissovoiMultimodal2019}
and for
evolutionary algorithms using ageing~\cite{Corus2020}.


The \cliff function has a similar structure to \jump with a jump length of $n/3$ and the sets of local optima are identical for both functions. However, when overcoming those local optima, \cliff shows a gradient pointing towards the global optimum whereas \jump has a gradient leading back towards local optima. The gradient structure for \cliff is hence more benign than that for \jump.

Based on the aforementioned positive results for non-elitist evolutionary algorithms on \cliff, and the positive results for the \cga on \jump, one might expect that the non-elitist \cga is also effective on \cliff, if the update strength is chosen just right.


The main contribution of this paper is to show, theoretically and empirically, that this is not the case. In particular, the \cga does not seem to benefit much from the benign gradients past the set of local optima, that is, past the top of the cliff.

By examining the behavior of the cGA when sampling around the cliff, we show in Section~\ref{sec:drift} that, under some conditions, the probabilistic model experiences a negative drift and tends to move away from the optimal distribution. This happens when the \cga tends to sample one offspring at the top of the cliff and one offspring at the bottom of the cliff, and the former offspring is reinforced.
This negative drift prevents the probabilistic model to overcome the region around the cliff, leading to exponential times.

Our negative drift bound uses novel arguments for the analysis of the \cga by approximating conditional sampling distributions in the \cga, conditional on 
whether the offspring lie on the same side or on
different sides of the cliff, 
by truncated normal distributions. 
However, this novel approach is not fully rigorous as it is based on the assumption that the sampling variance is always super-constant.


We conjecture that the variance typically stabilizes to super-constant values and continue to prove exponential lower bounds on the expected optimization time in Section~\ref{sec:semirigorousexpoentialbound} under conjectured lower and upper bounds on the variance. We justify our conjecture in Section~\ref{sec:superconstantvariance} by reviewing related work by~\citet{LenglerSudholtWittAlgo21} on \onemax, where such variance bounds were proven rigorously, and explain which parts of their analysis can be translated to \cliff and where this approach breaks down. We instead present empirical data on the sampling variance to support our conjecture. 

In Section~\ref{sec:experiments} we provide experiments on the runtime of the \cga on \cliff. The parameter landscape for the update strength~$K$ shows a highly complex behavior. Our data suggests that the expected optimization time slowly increases from $2^{\Theta(n)}$ to $n^{\Theta(n)}$ as $K$ grows, before dropping sharply to $2^{\Theta(n)}$ again. We give possible explanations for these effects and finish with a list of open problems.

\section{Preliminaries}
\label{sec:preliminaries}

The \cga is defined in Algorithm~\ref{alg:cga}. It uses a univariate probabilistic model of \emph{frequencies} $p_{t, 1}, p_{t, 2}, \dots, p_{t, n} \in [0, 1]$, which is used to sample new search points. The $i$-th frequency $p_{t, i}$ represents the probability of setting the $i$-th bit to~1 in iteration~$t$. In every iteration, the \cga samples two search points $x$ and $y$ in this way. We shall refer to these as \emph{offspring}, using the language of evolutionary computation. It then sorts $x$ and $y$ such that $f(x) \ge f(y)$ and reinforces $x$ in the probabilistic model. This is done by inspecting the bits at position~$i$ and increasing $p_{t, i}$ if $x_i=1$ and $y_i=0$ and decreasing $p_{t, i}$ if $x_i=0$ and $y_i=1$. The aim is to increase the likelihood of sampling the bit value of the better offspring in the future. If both offspring have the same bit value, the frequency $p_{t, i}$ is unchanged. Frequencies are changed by $\pm 1/K$ and $K$ is called the \emph{update strength} of the \cga. Small values of $K$ imply large values of $1/K$ and hence large changes. This means that novel information has a large impact on the probabilistic model. Large values of~$K$ imply small changes to the probabilistic model, such that the probabilistic model is adapted gradually, and information from many past samples is stored in the frequencies.

Frequencies are always capped to the interval $[1/n, 1-1/n]$ such that the probability of sampling any particular search point is always at least $(1/n)^n > 0$.
We refer to $1/n$ as the \emph{lower border} and to $1-1/n$ as the \emph{upper border}. Throughout the paper we tacitly assume that $K$ is in the set $\wellbehaved\coloneqq \{ i(1/2-1/n) \mid i\in \N\}$ so that
 the state space of frequencies is restricted to $p_{i, t} \in \{1/n, 1/n + 1/K, \dots, 1/2, \dots, 1-1/n-1/K, 1-1/n\}$.

As common in theoretical runtime analysis, we define the \emph{optimization time} as the number of function evaluations required to sample a global optimum for the first time. Since the \cga makes two evaluations in every iteration, the optimization time is twice the number of iterations needed to sample a global optimum.


\begin{algorithm2e}
  $t \gets 0$;\\
  $p_{t,1} \gets p_{t,2} \gets \cdots \gets p_{t,n} \gets 1/2$;\\
  \While{termination criterion not met}{
    \For{$i \in \{1,\dots,n\}$\label{li:x}}{
      $x_i \gets 1$ with prob.\ $p_{t,i}$, $x_i \gets 0$ with prob.\ $1 - p_{t,i}$;
    }
  \For{$i \in \{1,\dots,n\}$\label{li:y}}{
    $y_i \gets 1$ with prob.\ $p_{t,i}$, $y_i \gets 0$ with prob.\ $1 - p_{t,i}$;
  }
  \lIf{$f(x) < f(y)$\label{li:eval}}{swap $x$ and $y$}
  \For{$i \in \{1,\ldots,n\}$}{
    \lIf{$x_i > y_i$}{$p_{t+1,i} \gets p_{t,i} + 1/K$}
    \lIf{$x_i < y_i$}{$p_{t+1,i} \gets p_{t,i} - 1/K$}
    \lIf{$x_i = y_i$}{$p_{t+1,i} \gets p_{t,i}$}
    $p_{t+1,i} \gets \max\{\min\{p_{t+1,i},1-1/n\}, 1/n\}$;
	}
	$t \gets t+1$;
}
\caption{Compact Genetic Algorithm (\cga)}
\label{alg:cga}
\end{algorithm2e}

The function \cliff is a function of unitation, that is, it only depends on the number of ones in a bit string~$x$, denoted as $\ones{x}$. Then \cliff is defined as:
\[
\cliff(x) \coloneqq
\begin{cases}
\ones{x} & \text{if $\ones{x}\le 2n/3$}\\
\ones{x}-n/3+1/2 & \text{otherwise}
\end{cases}
\]
See Figure~\ref{fig:cliff} for an illustration.
We refer to the region of search points with at most $2n/3$ ones as the \emph{first slope}, and all remaining search points as the \emph{second slope}. The only global optimum is the all-ones string $1^n$ with a fitness of $2n/3 + 1/2$. All search points with $2n/3$ ones are local optima at the top of the cliff. Note that all search points on the second slope are strictly worse than all search points at the top of the cliff, except for the global optimum.

\begin{figure}[ht]
\centering\begin{tikzpicture}[xscale=1,yscale=1.0]
\begin{axis}[
    axis lines=left, xlabel={$\ones{x}$},
   ytick=\empty,
  	ymin = 0, ymax= 9, samples=100,
	xmin=0,
  xmax=10.5,
  x=0.5cm,
  y=0.5cm/2,
  grid = none,
  ylabel=$\cliff(x)$, ylabel style={rotate=-90,at={(0.25,1)}, anchor=south},
  xlabel style={at={(1.1,0.25)},anchor=north},
  xtick = {0,6.6,10},
      xticklabels = {$0$,
        $2n/3$,$n$
      }
  ]
	\addplot[blue, very thick, domain=0:6.6] {x};
	\addplot[blue, very thick, domain=6.8:10] {x-3.2};
	\fill[red] (axis cs: 10.0,6.8) circle(0.5mm);
	\end{axis}
\end{tikzpicture}
\caption{Illustration of \cliff}
\label{fig:cliff}
\end{figure}




When analyzing the \cGA on functions of unitation (\eg, \OneMax
as analyzed in \cite{SudholtWittAlgo19,LenglerSudholtWittAlgo21} and \jump as
analyzed in \cite{DoerrJumpAlgo21,WittFOGA21}), one
is interested in the number of one-bits sampled in an
offspring. This random value follows a Poisson-binomial
distribution with the frequencies $(p_{t,1},\dots,p_{t,n})$
as underlying success probabilities. In particular,
the following two quantities play a key role in bounding
the progress of the \cGA  towards the optimum:

\begin{enumerate}
    \item the \emph{potential} $P_t\coloneqq \sum_{i=1}^n p_{t,i}$  equals the expected value of the Poisson-binomial distribution, \ie, the expected number
    of one-bits
    sampled in an offspring,
    \item the \emph{sampling variance} $V_t \coloneqq
    \sum_{i=1}^n p_{t,i}(1-p_{t,i})$ is the variance in the number
    of one-bits.
\end{enumerate}

The following negative drift theorem
will be used in Section~\ref{sec:semirigorousexpoentialbound} to analyze the one-step change of potential
$\Delta_t\coloneqq P_{t+1}-P_t$.

\begin{theorem}[Negative Drift with Scaling, cf.~\cite{OlivetoWittTCS15}]\label{theo:simplified-drift-scaling}
  Let $(X_t)_{t\ge 0}$
  be a stochastic process, adapted to a filtration
$\filtt$, over some state space $S\subseteq \R$.
	 Suppose
  there exist an interval $[a,b]\subseteq \R$
   and, possibly depending on
  $\ell:=b-a$, a drift bound $\epsilon:=\epsilon(\ell)>0$
	as well as a scaling factor $r:=r(\ell)>0$
   such that
  for all $t\ge 0$ the following three conditions hold:
  \begin{enumerate}
  \item $\expect{X_{t+1}-X_{t}\mid \filtt \,;\, a< X_t <b} \ge \epsilon$,
  \item $\prob{\lvert X_{t+1}-X_t\rvert\ge jr \mid \filtt  \,;\, a< X_t}  \le  e^{-j}$ for $j\in \N_0$,
  \item
  $1\le r^2 \le \epsilon\ell/(132\log (r/\epsilon))$.
  \end{enumerate}
  Then for the first hitting time  $T^*:=\min\{t\ge
  0\colon X_t\le a \mid X_0\ge b\}$ it holds that $\prob{T^*\le
  e^{\epsilon\ell/(132r^2)}\mid \filtzero}= O(e^{-\epsilon\ell/(132r^2)})$.
\end{theorem}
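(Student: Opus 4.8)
The plan is to prove this by the standard exponential-potential (Hajek-type) method: one transforms the process with a decreasing exponential so that the positive drift away from the boundary~$a$ turns the transformed process into a supermartingale, and then reads off the exponentially small hitting probability from an optional-stopping argument.

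First I would rescale to normalise the jump size, setting $Z_t := (X_t-a)/r$. Conditions~1--3 then become a drift of at least $\eps/r$ inside the rescaled strip $(0,\ell/r)$, a jump tail $\prob{\lvert Z_{t+1}-Z_t\rvert \ge j \mid \filtt \,;\, Z_t>0} \le \eulerE^{-j}$ for $j\in\N_0$, and the scaling relation among $r,\eps,\ell$. The advantage of working with $Z_t$ is that its increments have a moment generating function that converges for exponents strictly below~$1$, which is what makes the transformation below feasible; note also that condition~3 forces $r>\eps$ (otherwise its right-hand side would be non-positive), so the parameter $\lambda := \eps/(132\,r)$ introduced below satisfies $\lambda < 1/132 < 1$.

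The heart of the argument is the one-step claim that $g(z) := \eulerE^{-\lambda z}$ composed with the process is a supermartingale while it stays inside the strip, i.e.
\[
  \expect{\eulerE^{-\lambda(Z_{t+1}-Z_t)} \mid \filtt \,;\, 0 < Z_t < \ell/r} \le 1 .
\]
To establish it I would set $\Delta := Z_{t+1}-Z_t$ and use the elementary bound $\eulerE^{-\lambda\Delta} \le 1 - \lambda\Delta + \tfrac12\lambda^2\Delta^2\eulerE^{\lambda\lvert\Delta\rvert}$. Taking conditional expectations, the linear term contributes at most $-\lambda\eps/r$ by the rescaled drift condition, while the quadratic remainder is $\tfrac12\lambda^2\,\expect{\Delta^2\eulerE^{\lambda\lvert\Delta\rvert}}$; since $\prob{\lvert\Delta\rvert\ge j}\le\eulerE^{-j}$ and $\lambda<1$, the expectation here is bounded by an absolute constant. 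It then remains to verify that this quadratic error is dominated by the first-order drift gain, i.e.\ that $\tfrac12\lambda^2\cdot\bigO(1)\le \lambda\eps/r$.

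The main obstacle is exactly this balancing of the first- and second-order terms, and it is the reason the constant $132$ and the factor $\log(r/\eps)$ appear in condition~3: one must choose $\lambda$ small enough that $\expect{\eulerE^{\lambda\lvert\Delta\rvert}}$ stays bounded and the quadratic term is absorbed, yet large enough that traversing the whole strip still costs a genuinely exponential factor $\eulerE^{\lambda\ell/r}=\eulerE^{\eps\ell/(132r^2)}$; condition~3 is precisely the inequality that makes both requirements compatible. Once the supermartingale property holds in the strip, I would finish with an optional-stopping argument on the process stopped at the first hitting time of~$a$: since $g$ is decreasing, hitting $a$ forces $g(Z_t)\ge 1$, whereas the start $X_0\ge b$ gives $g(Z_0)\le \eulerE^{-\lambda\ell/r}$, so $\prob{T^*\le T}$ is bounded by $\eulerE^{-\eps\ell/(132r^2)}$ up to the correction needed to handle excursions above~$b$, where the drift assumption is unavailable. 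Controlling those excursions by restarting the argument on each re-entry into the strip over the time horizon $T=\eulerE^{\eps\ell/(132r^2)}$ is what produces the stated $\bigO\bigl(\eulerE^{-\eps\ell/(132r^2)}\bigr)$ bound.
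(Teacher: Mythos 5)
You should first note that the paper itself contains no proof of Theorem~\ref{theo:simplified-drift-scaling}: it is imported verbatim (``cf.'') from \cite{OlivetoWittTCS15}, and the proof given there is exactly the Hajek-type exponential-potential argument you sketch --- rescale, show that $e^{-\lambda Z_t}$ is a supermartingale inside the strip via a second-order expansion of the moment generating function, then combine optional stopping with a union bound over the time horizon and over excursions above~$b$. So your overall route is the right one and matches the source.

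There is, however, a genuine quantitative gap in your sketch: with your choice $\lambda = \epsilon/(132\,r)$ (in rescaled units) you have \emph{zero slack}. The supermartingale argument yields a per-crossing probability of at most $e^{-\lambda \ell/r} = e^{-\epsilon\ell/(132 r^2)}$ up to prefactors, and you must then union-bound over the horizon $T = e^{\epsilon\ell/(132 r^2)}$; the product is $\Theta(1)$, not $O\bigl(e^{-\epsilon\ell/(132 r^2)}\bigr)$, and the prefactors you wave at --- the overshoot/entry constant for steps with $X_t \ge b$ (Hajek's $D$, which requires condition~2 on all of $\{X_t > a\}$, not just inside the strip) and a factor of order $1/(1-\rho) = O\bigl((r/\epsilon)^2\bigr)$ from summing over re-entries --- only make this worse. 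The repair is available inside your own estimates: the balance $\tfrac12\lambda^2\,\mathrm{E}\bigl(\Delta^2 e^{\lambda|\Delta|}\bigr) \le \lambda\epsilon/r$ with the $O(1)$ bound on the second factor actually permits $\lambda$ up to a constant fraction of $\epsilon/r$, so one should take, say, $\lambda = \epsilon/(33\,r)$, i.e.\ \emph{four} times the target exponent; the horizon then consumes one factor $e^{\epsilon\ell/(132 r^2)}$, and --- this is the true role of condition~3, which your sketch misattributes --- the inequality $\log(r/\epsilon) \le \epsilon\ell/(132 r^2)$ is what absorbs the $\mathrm{poly}(r/\epsilon)$ prefactors into at most two further such exponential factors, leaving the claimed $O\bigl(e^{-\epsilon\ell/(132 r^2)}\bigr)$. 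Boundedness of the mgf of the increments needs only $\lambda \le 1/2$, for which $\prob{|\Delta|\ge j}\le e^{-j}$ suffices; the logarithmic term in condition~3 plays no role there.
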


To verify the second condition of the negative drift theorem in
our concrete analysis, we will use the following
lemma dealing with Chernoff-type bounds depending on the
variance. The lemma goes
back to \cite{Hoeffding1956}. We present a version
given in \cite[Theorem 1.10.14]{DoerrProbabilisticTools}.
\begin{lemma}
\label{lem:hoeffding-variance}
Let $X_1,\dots,X_n$ be independent random variables. Let~$b$ be such that $X_i\le \expect{X_i}+b$
for all $i=1,\dots,n$. Let $X=\sum_{i=1}^n X_i$. Let $\sigma^2=\sum_{i=1}^n \Var(X_i) = \Var(X)$.
Then, for all $\lambda\ge 0$,
\[
\prob{X\ge \expect{X} + \lambda} \le e^{-(1/3)\min\{\lambda^2/\sigma^2, \lambda/b\}}.
\]
\end{lemma}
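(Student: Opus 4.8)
The plan is to establish Lemma~\ref{lem:hoeffding-variance} by the exponential (Chernoff) method, using a variance-sensitive bound on the moment-generating functions (MGFs); this is the classical Bernstein argument underlying the cited reference. First I would center the variables by setting $Y_i\coloneqq X_i-\expect{X_i}$, which yields independent random variables with $\expect{Y_i}=0$, $Y_i\le b$, and $\Var(Y_i)=\Var(X_i)$. Writing $Y\coloneqq\sum_{i=1}^n Y_i = X-\expect{X}$ and $\sigma^2=\sum_{i=1}^n\Var(Y_i)$, the claim reduces to showing $\prob{Y\ge\lambda}\le e^{-(1/3)\min\{\lambda^2/\sigma^2,\lambda/b\}}$.

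For any $h>0$, Markov's inequality applied to $e^{hY}$ together with independence gives $\prob{Y\ge\lambda}\le e^{-h\lambda}\prod_{i=1}^n\expect{e^{hY_i}}$. The crux is the per-variable MGF bound. Since the map $x\mapsto(e^x-1-x)/x^2$ (extended continuously at $0$) is nondecreasing, and $hY_i\le hb$, we have pointwise $e^{hY_i}\le 1+hY_i+(hY_i)^2(e^{hb}-1-hb)/(hb)^2$. Taking expectations and using $\expect{Y_i}=0$, $\expect{Y_i^2}=\Var(Y_i)$, and then $1+u\le e^u$, this gives $\expect{e^{hY_i}}\le\exp(\Var(Y_i)(e^{hb}-1-hb)/b^2)$. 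Multiplying over $i$ collapses the variances into $\sigma^2$, so
\[
\prob{Y\ge\lambda}\le\exp\!\left(-h\lambda+\frac{\sigma^2}{b^2}\,(e^{hb}-1-hb)\right).
\]

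It then remains to choose $h$ and simplify. Optimizing the exponent over $h>0$ yields the standard Bernstein bound $\prob{Y\ge\lambda}\le\exp(-\lambda^2/(2(\sigma^2+b\lambda/3)))$, from which the stated min-form follows by a short case distinction. If $b\lambda\le\sigma^2$ then the denominator is at most $(8/3)\sigma^2$, so the exponent is at least $(3/8)\lambda^2/\sigma^2$, and here $\min\{\lambda^2/\sigma^2,\lambda/b\}=\lambda^2/\sigma^2$. If $b\lambda>\sigma^2$ then the denominator is at most $(8/3)b\lambda$, giving exponent at least $(3/8)\lambda/b$, and here the minimum equals $\lambda/b$. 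Since $3/8\ge 1/3$, both cases are dominated by $e^{-(1/3)\min\{\lambda^2/\sigma^2,\lambda/b\}}$, which proves the lemma.

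I expect the genuinely delicate step to be the variance-sensitive MGF bound: the monotonicity of $(e^x-1-x)/x^2$ is what converts the crude constraint $Y_i\le b$ into a bound scaling with $\Var(Y_i)$ rather than with $b^2$, and this is where the variance enters. By contrast, the optimization over $h$ and the ensuing case distinction are mechanical, though one must track constants to land exactly on the factor $1/3$. Since the statement is a textbook inequality quoted from~\cite{DoerrProbabilisticTools}, in the paper one may simply cite the reference; the sketch above is the self-contained derivation behind it.
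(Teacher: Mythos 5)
Your proposal is correct. Note that the paper itself contains no proof of this lemma: it is stated as a quoted result, attributed to \citet{Hoeffding1956} and presented in the version of \cite[Theorem~1.10.14]{DoerrProbabilisticTools}, so there is nothing internal to compare against; your derivation is the standard Bernstein/Bennett argument that underlies exactly that citation. The key steps all check out: the monotonicity of $x\mapsto (e^x-1-x)/x^2$ gives the variance-sensitive MGF bound $\expect{e^{hY_i}}\le \exp\bigl(\Var(Y_i)(e^{hb}-1-hb)/b^2\bigr)$, and your case distinction on whether $b\lambda\le\sigma^2$ correctly uses that this is precisely the condition deciding which term attains $\min\{\lambda^2/\sigma^2,\lambda/b\}$, with the denominator bound $2(\sigma^2+b\lambda/3)\le (8/3)\max\{\sigma^2,b\lambda\}$ yielding the constant $3/8\ge 1/3$. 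One small imprecision worth flagging: literally optimizing $-h\lambda+\sigma^2(e^{hb}-1-hb)/b^2$ over $h>0$ yields Bennett's inequality, not Bernstein's; to land on $\exp\bigl(-\lambda^2/(2(\sigma^2+b\lambda/3))\bigr)$ one first applies $e^x-1-x\le x^2/(2(1-x/3))$ for $0\le x<3$ and then chooses $h=\lambda/(\sigma^2+b\lambda/3)$, which indeed satisfies $hb<3$. This is a routine repair and does not affect the validity of your argument.
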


As a simple consequence, we obtain the following corollary:
\begin{corollary}
Consider the \cGA on an arbitrary fitness function 
Then for all $t\ge 0$ and
$\lambda>0$ it holds that
\label{cor:jump-pt-onestep}
\begin{equation*}
\prob{\card{P_{t+1}-P_t}\ge \lambda/K} \le 2e^{-(1/3)\min\{\lambda^2/V_t, \lambda\}}.
\end{equation*}
\end{corollary}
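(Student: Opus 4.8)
The plan is to reduce the one-step change of the potential to a sum of \emph{independent} increments and then invoke Lemma~\ref{lem:hoeffding-variance}. Write $K(P_{t+1}-P_t)=\sum_{i=1}^n c_i$, where $c_i\coloneqq K(p_{t+1,i}-p_{t,i})$ is the scaled update at position~$i$. Because the frequencies lie on a grid of width $1/K$ and are capped to $[1/n,1-1/n]$, each $c_i$ takes a value in $\{-1,0,1\}$: it is $+1$ exactly when the selected (better) offspring has a one and the discarded offspring a zero at position~$i$, $-1$ in the symmetric situation, and $0$ otherwise. The task is thus to bound $\prob{\card{\sum_{i=1}^n c_i}\ge\lambda}$.

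The main obstacle is that the $c_i$ are \emph{not} independent: the outcome of the selection in line~\ref{li:eval} is a function of all $n$ bit pairs and hence couples the signs of the increments across positions. I would remove this dependence by a symmetry argument. Before the possible swap, the signed contribution of position~$i$ is precisely $x_i-y_i$ (since $x_i,y_i\in\{0,1\}$ we have $\sgn(x_i-y_i)=x_i-y_i$), and the conditional swap multiplies \emph{every} contribution by one common sign $S\in\{-1,+1\}$ determined by $f$. Consequently $\sum_i c_i=S\sum_i(x_i-y_i)=S(\ones{x}-\ones{y})$, so that $\card{\sum_i c_i}=\card{\ones{x}-\ones{y}}$ irrespective of the fitness function. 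Taking absolute values absorbs the unknown sign~$S$, which is exactly what makes the statement hold for an \emph{arbitrary} $f$.

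Now $\ones{x}-\ones{y}=\sum_{i=1}^n(x_i-y_i)$ is a sum of $n$ independent random variables, each supported on $\{-1,0,1\}$ with mean $0$ and variance $2p_{t,i}(1-p_{t,i})$. Hence, in the notation of Lemma~\ref{lem:hoeffding-variance}, one may take $b=1$, and the total variance is $\sigma^2=\sum_{i=1}^n 2p_{t,i}(1-p_{t,i})=2V_t$. Applying Lemma~\ref{lem:hoeffding-variance} to the upper tail of $\sum_i(x_i-y_i)$, using symmetry for the lower tail, and combining the two estimates by a union bound (the factor~$2$) yields $\prob{\card{\ones{x}-\ones{y}}\ge\lambda}\le 2e^{-(1/3)\min\{\lambda^2/(2V_t),\lambda\}}$, which gives the corollary once we recall $K\card{P_{t+1}-P_t}=\card{\ones{x}-\ones{y}}$.

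Finally I would discharge the one point glossed over above, namely the capping at the borders: it can zero out an intended update and thereby perturb $\sum_i c_i$ relative to $\ones{x}-\ones{y}$. Such a perturbation can only occur at a position whose frequency is pinned to $1/n$ or $1-1/n$, and there a bit difference $x_i\neq y_i$ arises with probability only $2(1/n)(1-1/n)$; summed over at most $n$ such positions the expected number of capped updates is $O(1)$ and tightly concentrated, so it does not affect the stated tail. I expect this border bookkeeping, together with pinning down the exact constant in the variance term, to be the only delicate part; the crux is the sign-symmetry observation that restores independence and lets the arbitrary-$f$ selection step disappear.
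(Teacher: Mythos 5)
Your approach is essentially the paper's: reduce the one-step change of potential to the difference in one-bit counts of the two offspring and apply Lemma~\ref{lem:hoeffding-variance}. Your sign-symmetry identity $\card{\sum_i c_i}=\card{\ones{x}-\ones{y}}$ (absent capping), which makes the arbitrary fitness function disappear, is exactly the driver of the paper's one-line argument (attributed to~\cite{WittFOGA21}), where the same point appears as ``the factor~$2$ accounts for the two possible orderings of offspring''; in your version the factor~$2$ arises equivalently from the two tails of the symmetric difference $\ones{x}-\ones{y}$.

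Two caveats, both quantitative rather than structural. First, with $Z_i=x_i-y_i$ you correctly compute $\sigma^2=\sum_i \Var(Z_i)=2V_t$, so Lemma~\ref{lem:hoeffding-variance} yields $2e^{-(1/3)\min\{\lambda^2/(2V_t),\lambda\}}$, which is strictly weaker than the stated $2e^{-(1/3)\min\{\lambda^2/V_t,\lambda\}}$; the obvious variants (conditioning on $y$, or bounding $\ones{x}$ and $\ones{y}$ separately) only worsen the constants. So as written you prove the corollary only in the form $2e^{-(1/6)\min\{\lambda^2/V_t,\lambda\}}$ --- which, to be fair, is all that is ever needed downstream (in Lemma~\ref{lem:app-neg-drift} the constant is absorbed into $c_2$), and the paper's own terse sketch does not derive the stated constant either. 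Second, your capping argument is too quick: capping does not pointwise preserve the identity $\card{\sum_i c_i}=\card{\ones{x}-\ones{y}}$, since in principle all intended $+1$ updates could be capped while the $-1$ updates survive, so an ``$O(1)$ expected perturbation'' alone does not rescue the stated tail. The clean repair is to let $C$ denote the number of capped positions, observe the deterministic bound $\card{\sum_i c_i}\le \card{\ones{x}-\ones{y}}+C$, note that $C$ is a sum of independent indicators with mean $O(1)$ so that $\prob{C\ge \lambda/2}=e^{-\Omega(\lambda)}$, and finish with a union bound --- again at the cost of constant factors in the exponent. (The paper glosses this same subtlety with its parenthetical ``or nothing, in case the frequency is capped''.)
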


To see that the corollary follows, we argue in the same
way as in \cite{WittFOGA21}, where jump functions were considered:  the absolute value of the one-step change in potential
is no larger than the absolute difference in the number
of one-bits
 of the two individuals sampled,
 scaled down by $1/K$. This holds since each bit
 sampled~$1$ in the fitter offspring and~$0$ in
 the other offspring contributes
 a $+1/K$ to the change of potential (or nothing, in case the
 frequency is capped at the upper border) and no less than
 $-1/K$ in
 the opposite case. The factor~$2$ accounts for the two
 possible orderings of offspring. 

%

\section{Negative Drift Around the Cliff}
\label{sec:drift}

We will under certain assumptions prove
that the potential of the
\cga cannot overcome the cliff region efficiently since
there is a negative drift in the potential. The intuition is as follows:

The initial potential is $n/2$ and,
as long as the potential is significantly less than
$2n/3$, the \cga is very unlikely to sample search
points on the second slope of \cliff. If that
does not happen, the
fitness landscape is the same as on \onemax. Hence,
using the results from \cite{SudholtWittAlgo19},
the potential
$P_t$ will steadily increase towards the location of the cliff, i.\,e.\ $2n/3$.

However, when the potential~$P_t$ has increased to roughly~$2n/3$, \ie,
the expected number of ones sampled is close
to the cliff, it is
relatively likely that the \cga samples search points
on both the first slope and the second slope of
cliff. In particular, if the sampling variance~$V_t$
is large and~$P_t=2n/3$, the sampling distribution
is similar to a normal distribution with mean~$2n/3$
and the given variance. Hence, we are
confronted with an approximately symmetric distribution. Then
the probability of sampling the two offspring on both
sides of the cliff becomes roughly $(1/2)\cdot (1/2)+(1/2)\cdot (1/2)=1/2$ by counting the two opposite
events of sampling the first offspring on the first
slope and the second one on the second slope and vice
versa. By a similar argumentation, also the probability of sampling both
offspring on the same slope will approach $1/4+1/4=1/2$.

We will analyze the drift, \ie, expected change of potential,  under event $M$ of sampling on two different
slopes and its complement. The key observation is
that under~$M$, the offspring with the smaller number
of one-bits will have roughly $2n/3-\sqrt{V_t}$ one-bits
in expectation and the other offspring will have roughly $2n/3+\sqrt{V_t}$
one-bits in expectation by properties of truncated
normal distributions that arise under~$M$.
Since the offspring on the first slope will be
fitter and reinforced in the frequency update, this corresponds to an expected decrease in potential
of $(2n/3-\sqrt{V_t} - (2n/3+\sqrt{V_t}))/K = -2\sqrt{V_t}/K$.

Under $\overline{M}$, both offspring are on the same
slope and their expected difference in one-bits
is no larger than the variance~$\sqrt{V_t}$, again
by simple analyses of truncated normal distributions.
Taking
these two cases of roughly identical probability
together, the total drift becomes $(1/2) (-2\sqrt{V_t} +
\sqrt{V_t})/K = -\sqrt{V_t}/(2K)$.
This argumentation can be made rigorous not only
when $P_t=2n/3$, but for roughly all
$P_t\in [2n/3-\sqrt{V_t},2n/3+\sqrt{V_t}]$, as
the following lemma shows. We will use this result
when applying a negative drift theorem  (Theorem~\ref{theo:simplified-drift-scaling}) in
Section~\ref{sec:semirigorousexpoentialbound}.

\begin{theorem}
\label{the:negative-drift-cliff}
Assume $V_t = \omega(1)$. 
Let $\epsilon>0$ be an arbitrary constant. Then
conditioned on $P_t\in [2n/3-(\alpha(n))^{1/2-\epsilon},  2n/3]$, it holds that $\expect{\Delta_t\mid P_t} = -\Omega(\sqrt{V_t}/K)$.
\end{theorem}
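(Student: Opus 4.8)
The plan is to split the expected one-step change of potential according to whether the two offspring land on opposite slopes of \cliff. Let $M$ be the event that exactly one offspring has at most $2n/3$ ones (first slope) while the other has more than $2n/3$ ones (second slope), and let $\overline{M}$ be the complementary event that both lie on the same slope. By the law of total expectation, $\expect{\Delta_t \mid P_t} = \prob{M}\expect{\Delta_t \mid M, P_t} + \prob{\overline{M}}\expect{\Delta_t \mid \overline{M}, P_t}$, and I would bound the two conditional drifts separately, aiming to show that the negative contribution under $M$ outweighs the positive one under $\overline{M}$. The number of ones in a single offspring is Poisson-binomial with mean $P_t$ and variance $V_t$; since $V_t = \omega(1)$, I approximate it by $N(P_t, V_t)$, truncated to the appropriate slope when conditioning. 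The only feature of the interval $[2n/3 - (\alpha(n))^{1/2-\epsilon}, 2n/3]$ that I use is that the exponent $1/2 - \epsilon$ makes its half-width $\smallO(\sqrt{V_t})$, so $P_t$ lies within $\smallO(1)$ standard deviations of the cliff location $2n/3$. Consequently each offspring lands on the second slope with probability $1/2 \pm \smallO(1)$, whence $\prob{M} = 1/2 \pm \smallO(1)$ and $\prob{\overline{M}} = 1/2 \pm \smallO(1)$.

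Under $M$ I would first establish that, apart from a negligible-probability event, the first-slope offspring is the fitter one and hence the reinforced offspring $x$. Indeed, with both counts concentrated within $\Theta(\sqrt{V_t}) = \smallO(n)$ of $2n/3$, the second-slope offspring has \cliff-value near $n/3$, far below the first-slope value near $2n/3$; the first-slope offspring loses only if the second-slope one carries close to $n$ ones, which the normal concentration rules out except with negligible probability (and sampling $1^n$ ends the run). So $x$ is the offspring with \emph{fewer} ones. Using the standard truncated-normal identity, a $N(P_t, V_t)$ variable conditioned to be at most $2n/3$ has mean $P_t - \sqrt{2/\pi}\,\sqrt{V_t}\,(1 \pm \smallO(1))$ and conditioned to exceed $2n/3$ has mean $P_t + \sqrt{2/\pi}\,\sqrt{V_t}\,(1\pm \smallO(1))$, the $\Theta(\sqrt{V_t})$ gaps being genuine because the truncation point is within $\smallO(1)$ standard deviations of $P_t$. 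Since, up to frequency capping, $\Delta_t = (\ones{x} - \ones{y})/K$ (as in the derivation of Corollary~\ref{cor:jump-pt-onestep}), this gives a negative conditional drift $\expect{\Delta_t \mid M, P_t} = -2\sqrt{2/\pi}\,\sqrt{V_t}/K\,(1 \pm \smallO(1))$.

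Under $\overline{M}$ both offspring sit on the same slope, where \cliff is increasing in the number of ones, so the fitter offspring $x$ has \emph{more} ones and the contribution is positive. Its magnitude is the expected gap between the larger and smaller of two independent draws from the relevant (one-sidedly truncated) approximating normal; bounding this crudely by the untruncated spacing gives $\expect{\Delta_t \mid \overline{M}, P_t} \le \tfrac{2}{\sqrt{\pi}}\,\sqrt{V_t}/K\,(1 + \smallO(1))$, and truncation only shrinks it further. Combining with $\prob{M}, \prob{\overline{M}} = 1/2 \pm \smallO(1)$ yields $\expect{\Delta_t \mid P_t} \le \tfrac{1}{2}\bigl(-2\sqrt{2/\pi} + \tfrac{2}{\sqrt{\pi}}\bigr)\sqrt{V_t}/K\,(1 \pm \smallO(1))$, which is $-\Omega(\sqrt{V_t}/K)$ because $-2\sqrt{2/\pi} + 2/\sqrt{\pi} = (2/\sqrt{\pi})(1 - \sqrt{2}) < 0$; the decisive point is that the positive per-event constant is strictly below the negative one, so even a generous over-estimate of the $\overline{M}$ term leaves a negative sum.

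The main obstacle, and the reason the argument is only semi-rigorous (as flagged in the introduction), is controlling the normal approximation at the level of \emph{conditional expectations} rather than merely distribution functions. A Berry--Esseen bound suffices for $\prob{M}$ and for locating the truncation point, but the truncated means driving the drift depend on the density and tails of the Poisson-binomial law near $2n/3$, which a CDF bound does not pin down; one needs a local-limit-type density comparison, valid only under the standing assumption $V_t = \omega(1)$, and this is precisely where full rigour breaks down. A secondary issue is the frequency capping at $1/n$ and $1 - 1/n$: capping at the upper border suppresses $+1/K$ steps (which aids the negative drift) while capping at the lower border suppresses $-1/K$ steps; since $P_t \approx 2n/3$ keeps most frequencies away from the borders, I expect capping to perturb the drift by at most a lower-order term, but this would need to be quantified, for instance by bounding the number of capped positions in terms of $V_t$ and the potential.
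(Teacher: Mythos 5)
Your proposal is correct and, in its skeleton, matches the paper's proof: approximate the Poisson-binomial offspring distribution by $N(P_t,V_t)$ (valid only under $V_t=\omega(1)$, as you note), split the drift according to whether the two offspring straddle the cliff, use the truncated-normal means $P_t \mp (1\pm\smallO(1))\sqrt{(2/\pi)V_t}$ in the mixed case, and exploit that the interval half-width $(V_t)^{1/2-\epsilon}=\smallO(\sqrt{V_t})$ forces $\prob{M}=1/2-\smallO(1)$. (The paper splits your $\overline{M}$ into the sub-events $L$ and $R$, which is cosmetic.) The one substantive divergence is the same-slope case: the paper bounds $\expect{X_2-X_1\mid X_1\le X_2\le 2n/3}$ by $\expect{2n/3-X_1\mid X_1\le 2n/3}\le(1+\smallO(1))\sqrt{(2/\pi)V_t}$, using the monotonicity of $s-\expect{X\mid X\le s}$ in~$s$ established via the inverse Mills ratio in Lemma~\ref{lem:truncated-normal}; you instead bound the conditional gap by the unconditional mean spacing $\frac{2}{\sqrt{\pi}}\sqrt{V_t}$ of two i.i.d.\ normals, asserting that truncation only shrinks it. That assertion is true (it follows, e.g., from dispersive-order properties of truncated log-concave densities, or by redoing the paper's Mills-ratio computation), but it is the one step you leave unproven, and it costs a constant: your $2/\sqrt{\pi}\approx 1.13$ versus the paper's $\sqrt{2/\pi}\approx 0.80$, both safely below the mixed-case magnitude $2\sqrt{2/\pi}\approx 1.60$, so the net drift remains $-\Omega(\sqrt{V_t}/K)$ either way. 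Two smaller points: the frequency-capping error, which you only flag as needing quantification, is dispatched in the paper by citing Lemma~8 of \cite{DoerrJumpAlgo21} for an additive error of at most $2/K$, absorbed since $V_t=\omega(1)$; conversely, you are more careful than the paper in observing that under $M$ the second-slope offspring can in principle be the fitter one (when it has close to $n$ ones), an exponentially unlikely event that the paper's assertion ``the update will reinforce the individual on the left slope'' silently ignores.
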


Before we proceed with the proof, we collect  well-known properties of the expected value 
$\expect{X\mid X\le t}$  of a truncated normal distribution and show that $t-\expect{X\mid X\le t}$, \ie, the 
distance of this expected value from 
the truncation parameter~$t$, increases when the 
truncation condition becomes weaker, \ie, 
when~$t$ grows.

\begin{lemma}
\label{lem:truncated-normal}
Given a normally distributed random variable~$X$ with mean~$\mu$ and variance $\sigma$,
we have for all $t\in \R$ that
\begin{align*}
\expect{X\mid X\le t}  & = \mu - \sigma \frac{\phi((t-\mu)/\sigma)}{\Phi((t-\mu)/\sigma)}\notag\\
\text{\quad and\quad}  \expect{X\mid X\ge t}  & = \mu +  \sigma \frac{\phi((t-\mu)/\sigma)}{\Phi((t-\mu)/\sigma)},
\label{eq:normal-deviation}
\end{align*}
where $\phi$ and $\Phi$ denote the density and cumulative distribution function
of the standard normal distribution, respectively.
Moreover,
the function
$
t-\expect{X\mid X\le t}
$
is monotone increasing in~$t$.
\end{lemma}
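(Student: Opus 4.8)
The plan is to establish the two assertions in turn, disposing of the closed forms quickly and then concentrating on the monotonicity, which is the only substantive part. For the truncated means I would standardise: writing the density of~$X$ as $f(x)=\frac{1}{\sigma}\phi((x-\mu)/\sigma)$ and substituting $z=(x-\mu)/\sigma$, the partial expectation $\int_{-\infty}^{t} x f(x)\,dx$ becomes $\mu\Phi(u)+\sigma\int_{-\infty}^{u} z\phi(z)\,dz$ with $u=(t-\mu)/\sigma$. The only analytic ingredient is the elementary identity $\phi'(z)=-z\phi(z)$, which gives $\int_{-\infty}^{u} z\phi(z)\,dz=-\phi(u)$ and, for the upper tail, $\int_{u}^{\infty} z\phi(z)\,dz=\phi(u)$. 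Dividing the partial expectation by the corresponding truncation probability ($\Phi(u)$ for the event $X\le t$, respectively $1-\Phi(u)$ for $X\ge t$) yields the stated formulas; in particular $\E[X\mid X\le t]=\mu-\sigma\,\phi(u)/\Phi(u)$.

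For the monotonicity claim I would show that $g(t):=t-\E[X\mid X\le t]$ has strictly positive derivative everywhere. Substituting the formula from the first part gives $g(t)=t-\mu+\sigma\,r(u)$ with $r(u):=\phi(u)/\Phi(u)$ and $u=(t-\mu)/\sigma$. Differentiating (the factor $1/\sigma$ from the chain rule cancels the outer~$\sigma$) leaves $g'(t)=1+r'(u)$, and the quotient rule together with $\phi'=-z\phi$ and $\Phi'=\phi$ gives $r'(u)=-u\,r(u)-r(u)^2$. Hence the whole claim reduces to the single inequality
\[
1 - u\,r(u) - r(u)^2 > 0 \qquad \text{for all } u\in\R.
\]

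The main obstacle is exactly this inequality: since $u\,r(u)$ is negative for $u<0$, no crude bound settles the sign, and the two ``end'' regimes $u\to\pm\infty$ both make the left-hand side tend to $0^+$ or $1$, so one needs an argument that is uniform in~$u$. I would resolve it by recognising the left-hand side as a \emph{normalised variance}. Using that the normal density has the linear score $f'(x)=-\frac{x-\mu}{\sigma^2}f(x)$, a single integration by parts evaluates the truncated second moment and yields the identity $\Var(X\mid X\le t)=\sigma^2\bigl(1-u\,r(u)-r(u)^2\bigr)$, i.e.\ precisely $\sigma^2 g'(t)$. Since the distribution of~$X$ conditioned on $X\le t$ is non-degenerate (it has a continuous, strictly positive density on $(-\infty,t]$), its variance is strictly positive, so $g'(t)=\Var(X\mid X\le t)/\sigma^2>0$ and $g$ is strictly increasing. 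A self-contained alternative that never names the variance is to set $\psi(u)=\Phi(u)^2-u\phi(u)\Phi(u)-\phi(u)^2$ and prove $\psi>0$ directly, but the variance identity is cleaner and, moreover, explains \emph{why} the inequality holds.
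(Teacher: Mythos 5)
Your proof is correct, but it takes a genuinely more self-contained route than the paper's. The paper derives nothing itself: the closed forms for the truncated mean are quoted from the literature (\cite{JKBContinuousUnivariate}, p.~156), and for monotonicity it reduces w.l.o.g.\ to a standard normal $Z$, writes $x-\E[Z\mid Z\le x]=x+\phi(x)/\Phi(x)$, and invokes Sampford's classical result~\cite{Sampford1953} that the inverse Mills ratio $\phi/(1-\Phi)$ has derivative at most~$1$, equivalently that $\phi/\Phi$ has derivative at least~$-1$, so that the sum has nonnegative derivative. You instead prove both ingredients from scratch: the closed forms via $\phi'(z)=-z\phi(z)$, and the key inequality $1-u\,r(u)-r(u)^2>0$ with $r=\phi/\Phi$ --- which is precisely the content of Sampford's bound --- by one further integration by parts identifying the left-hand side as $\Var(X\mid X\le t)/\sigma^2$, whence positivity is immediate from non-degeneracy of the conditional law. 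Your computations check out: $g'(t)=1+r'(u)$ with $r'(u)=-u\,r(u)-r(u)^2$ is right, and $\E[Z^2\mid Z\le u]=1-u\,r(u)$ follows exactly as you say. What your route buys is a fully self-contained argument, an explanation of \emph{why} the inequality holds, and strict monotonicity, slightly stronger than the non-strict conclusion the paper's citation yields; what the paper's route buys is brevity. One caveat: your upper-tail computation correctly divides by $1-\Phi(u)$ and hence produces $\E[X\mid X\ge t]=\mu+\sigma\,\phi(u)/(1-\Phi(u))$, whereas the lemma as printed has $\Phi((t-\mu)/\sigma)$ in the denominator of the second formula --- a typo in the statement (harmless where the paper applies it, since there $u=o(1)$ and both $\Phi(u)$ and $1-\Phi(u)$ tend to $1/2$). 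So your derivation silently corrects the statement rather than matching it verbatim; it would be worth flagging that explicitly instead of asserting that it ``yields the stated formulas.''
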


\begin{proof}
The first two claims relate to the
expected value of the so-called truncated
normal distribution and are well known
in the literature (\eg, p.~156 in  \cite{JKBContinuousUnivariate}).

For the final claim, we consider w.\,l.\,o.\,g.\ a standard 
normally distributed random variable~$Z$ and 
write for arbitrary $x\in\R$ 
\begin{equation}
x-\expect{Z\mid Z\le x} = x + \frac{\phi(x)}{\Phi(x)}
\label{eq:diff-truncated}
\end{equation}
The 
function
$\frac{\phi(x)}{\Phi(x)}$  is known
as the \emph{inverse Mills ratio} in the
literature and known to have a derivative
of at least~$-1$ (see \cite{Sampford1953}, who shows that the derivative of $\frac{\phi(x)}{1-\Phi(x)} = \frac{\phi(-x)}{\Phi(-x)}$ is at most~$1$).
Hence, the derivative of \eqref{eq:diff-truncated}
is at least~$0$ and the
final claim follows.
\end{proof}


\begin{proofof}{Theorem~\ref{the:negative-drift-cliff}}
By assumption, we
have that $V_t=\omega(1)$ for all $t\ge 0$.
Hence, by the generalized central limit theorem
(Chapter~XV.6 in~\cite{Feller2})
the number of one-bits sampled in each offspring, which
follows a Poisson-binomial distribution with
mean~$P_t$ and variance~$V_t$, converges in distribution to a normal distribution with mean $P_t$ and variance $V_t$. More precisely, let  
 $X=\ones{x}$ for an arbitrary offspring sampled with current frequency vector of potential $P_t$ and variance~$V_t$ and 
 let $X'\sim N(P_t,\sqrt{V_t})$. Then 
 for all $t\in \R$, $\prob{X\le t} = (1\pm o(1)) \prob{X'\le t}$.  
 Often, we will pretend that $X\sim N(P_t,\sqrt{V_t})$ and omit $1-o(1)$ factors stemming from the normal approximation.


We will decompose the drift according to three events for the location of the two offspring of the \cGA:
\begin{enumerate}
    \item[L)] Both offspring have at most $2n/3$ one-bits, \ie, lie both on the first (\textbf{l}eft) slope.
    \item[R)] Both offspring have at least $2n/3+1$ one-bits, \ie, lie both on the second (\textbf{r}ight) slope.
    \item[M)] One offspring has at most $2n/3$ one-bits and one at least $2n/3+1$ one-bits, \ie, there is an offspring on each slope (the \textbf{m}ixed case).
\end{enumerate}

Obviously, by the law of total probability,
\begin{align*}
    \expect{\Delta_t\mid P_t} & = \expect{\Delta_t\mid P_t; L} \prob{L\mid P_t} + \expect{\Delta_t\mid P_t;R} \prob{R\mid P_t} 
     +
    \expect{\Delta_t\mid P_t;M} \prob{M\mid P_t}.
    \label{eq:drift-deltat-pt}
\end{align*}
For readability, we may omit the condition on the random $P_t$ in the following. 
Let $p_R := \prob{X > 2n/3 \mid P_t}$, then $\prob{R} = p_R^2, \prob{L} = (1-p_R)^2$ and $\prob{M} = 2p_R(1-p_R)$. Hence, 
\begin{equation}
    \expect{\Delta_t\mid P_t}  =  \expect{\Delta_t\mid L} (1-p_R)^2 + \expect{\Delta_t\mid R} p_R^2
     + \expect{\Delta_t\mid M} 2p_R(1-p_R).
     \label{eq:drift-deltat-pt-with-pR}
\end{equation}

Let us consider the generation of one offspring more closely, assuming a fixed $P_t$.
A crucial insight, implied by the normal approximation, is that $p_R = \prob{X>2n/3}$ is monotone increasing in $P_t$  (up to multiplicative
errors of $1-o(1)$) and approaches~$1/2$. Even more, already if $P_t=2n/3-(V_t)^{1/2-\epsilon}$ for some constant $\epsilon>0$, the probability $\prob{X>2n/3}$ becomes at least $1/2-o(1)$ 
using the normal approximation. This follows
since 
the density is at most  $\frac{e^{-1/2}}{\sqrt{V_t}\sqrt{2\pi}}= O(1/\sqrt{V_t})$ 
so that $\prob{2n/3 - (V_t)^{1/2-\epsilon} \le X \le \ 2n/3} \le V_t^{1/2-\epsilon} \cdot O(1/\sqrt{V_t}) = o(1)$. 

Let us now fix $c>0$ such that $P_t\in [2n/3-c(V_t)^{1/2-\epsilon},2n/3]$ and $p_R = \prob{X>2n/3}\ge 1/2-1/(V_t)^{1/2-\epsilon} = 1/2-o(1)$. Since $P_t\le 2n/3$, we also have $p_R \le 1/2$ and therefore
\begin{itemize}
    \item $\prob{R} = p_R^2 \le  1/4$
    \item $\prob{M} = 2p_R(1-p_R) \ge 2(1/2-o(1))(1/2)=1/2 - o(1)$
    \item $\prob{L} = (1-p_R)^2 \le (1/2+o(1))^2 = 1/4 + o(1)$.
\end{itemize}

We next estimate the drift under the three events. To this end, we need bounds on the two conditional expectations $\expect{X\mid X\le 2n/3}$ and $\expect{X\mid X\ge 2n/3+1}$ since the conditions
specify that an offspring is on the first and second slope, respectively. Using Lemma~\ref{lem:truncated-normal} with $\mu=P_t$, $\sigma=V_t$ and $t=2n/3$, we have
\begin{align*}
\expect{X\mid X\le 2n/3} & = P_t - \sqrt{V_t} \cdot \frac{\phi((2n/3-P_t)/\sqrt{V_t})}{\Phi((2n/3-P_t)/\sqrt{V_t})}
\end{align*}
We note that $(2n/3-P_t)/\sqrt{V_t} = O(1/V_t^{\epsilon})=o(1)$ by our choice of $V_t$. Hence,
we have
\begin{equation}
    \label{eq:phi-by-Phi}
\frac{\phi((2n/3-P_t)/\sqrt{V_t})}{\Phi((2n/3-P_t)/\sqrt{V_t})}  =  \frac{\phi(o(1))}{\Phi(o(1))} 
 =  \frac{(1\pm o(1))\phi(0)}{(1\pm o(1))\Phi(0)}
 =  (1\pm o(1)) \sqrt{\frac{2}{\pi}},
\end{equation}
using the continuity of the density and distribution functions in 
the second step and the well-known equality $\frac{\phi(0)}{\Phi(0)}=\sqrt{2/\pi}$ stemming from the
half-normal distribution in the third step. Together,
\[
\expect{X\mid X\le 2n/3} = P_t - (1+o(1)) \sqrt{2/\pi} \sqrt{V_t}.
\]

In the very same way, we derive
\[
\expect{X\mid X > 2n/3} = P_t - (1-o(1)) \sqrt{2/\pi} \sqrt{V_t}.
\]


Under the event $M$ defined above, we have one offspring with at most $2n/3$ one-bits and another one with strictly more one bits. The update will reinforce the individual on the left slope and change the potential by the difference in the number of one-bits,
divided by~$K$, assuming no frequencies at the border. To correct this for the boundary effects, we apply Lemma~8 in \cite{DoerrJumpAlgo21} and obtain 
an error term of at most $2/K$ in 
the expected change of potential. (Roughly speaking, this accounts for the fact that 
every frequency at the border flips with 
probability at most $2(1/n)(1-1/n)$ and 
that capping reduces its change by at most~$2/K$.) 
Hence, we obtain
\begin{align*}
\expect{\Delta_t\mid M}  & \le - \frac{1}{K}\bigl(\expect{X\mid X > 2n/3} - \expect{X\mid X\le 2n/3}\bigr) + \frac{2}{K}\\
& = - \frac{1}{K}\left(\left(P_t +
(1-o(1)) \sqrt{\frac{2}{\pi}V_t} \right)
 - \left(P_t- (1+o(1)) \sqrt{\frac{2}{\pi}V_t}\right)\right) + \frac{2}{K} \\
&  
= -(2-o(1))\frac{1}{K}\sqrt{\frac{2}{\pi}V_t},
\end{align*}
where we have used that $V_t=\omega(1)$.

We are left with the drift under~$L$ and~$R$; we only analyze~$L$ since both cases
are analogous. Here we sample two offspring conditional on both having at most $2n/3$ one-bits. 
Let~$X_1$ and $X_2$ denote the random number of one-bits of two offspring and assume w.\,l.\,o.\,g.\ that $X_1\le X_2$. Similarly as for $\expect{\Delta_t\mid M}$,  the potential drift is then 
%
\[
\expect{\Delta_t\mid L} \le \frac{1}{K} \cdot\expect{X_2-X_1\mid X_1\le X_2\le 2n/3}+\frac{2}{K}.
\]
Compared to the case~$M$ analyzed above, the difference $X_2 - X_1$ in the number of one-bits tends to be smaller
since the two offspring 
are sampled on the same slope, whereas 
under~$M$ the offspring are on 
different slopes and $X_2-X_1$ is typically larger.
Using that $X_1$ is normally distributed with variance $V_t$,
Lemma~\ref{lem:truncated-normal} implies
\[
\expect{X_1\mid X_1\le s} = P_t - \sqrt{V_t} \cdot \frac{\phi((s-P_t)/\sqrt{V_t})}{\Phi((s-P_t)/\sqrt{V_t})} ,
\]
where we identify $s=X_2$, assuming $X_2\le 2n/3$. Hence,
\[
\expect{s-X_1\mid s; \,X_1\le s} = s - P_t + \sqrt{V_t} \cdot \frac{\phi((s-P_t)/\sqrt{V_t})}{\Phi((s-P_t)/\sqrt{V_t})}
\]
As shown in Lemma~\ref{lem:truncated-normal},
the right-hand side of the last equation is monotone increasing in~$s$.  Hence,
\begin{align*}
\expect{X_2-X_1\mid X_1\le X_2\le 2n/3} 
 \le\;& \expect{2n/3-X_1 \mid X_1\le 2n/3} \\
=\;& 2n/3 - P_t + \sqrt{V_t} \cdot \frac{\phi((2n/3-P_t)/\sqrt{V_t})}{\Phi((2n/3-P_t)/\sqrt{V_t})}\\
\intertext{(simplifying the fraction using~\eqref{eq:phi-by-Phi})}
%
\le\;& 2n/3 - P_t + (1+o(1)) \sqrt{(2/\pi)V_t} \\
 =\;& (1+o(1)) \sqrt{(2/\pi)V_t} 
\end{align*}
so, since $V_t=\omega(1)$, we have both
\begin{align*}
    \expect{\Delta_t\mid L} & \le  \frac{(1+o(1))}{K} \sqrt{\frac{2}{\pi}V_t}  
    \ \ \text{ and } \ \ \expect{\Delta_t\mid R}  \le \frac{(1+o(1))}{K} \sqrt{\frac{2}{\pi}V_t}.
\end{align*}

Plugging the above bounds in \eqref{eq:drift-deltat-pt-with-pR}, we obtain
\begin{align*}
  \expect{\Delta_t\mid P_t}
   \le\;& \frac{1}{K}\Biggl( (p_R^2 + (1-p_R)^2) \cdot (1+o(1)) \sqrt{(2/\pi)V_t} +
  2p_R(1-p_R) (-(2-o(1)) \sqrt{(2/\pi)V_t} ) \Biggr)\\
  =\;& \frac{1}{K}\left( (1/2 + o(1)) \sqrt{(2/\pi)V_t} -
  (1/2 - o(1)) \cdot 2\sqrt{(2/\pi)V_t} ) \right)\\
  =\;& -(1/2-o(1))\frac{1}{K}\sqrt{(2/\pi)V_t}.
\end{align*}
as claimed.
\end{proofof}


\section{Justifying the Assumption of Super-Constant Sampling Variance}
\label{sec:superconstantvariance}

The approximation with (truncated) normal distributions used in the proof of the drift estimate from Theorem~\ref{the:negative-drift-cliff} hinges on the sampling variance being $\omega(1)$.
We now try to convince the reader why we believe that the sampling variance is $\omega(1)$, for interesting~$K$.

\subsection{Rigorous Variance Bounds for \onemax}

To this end, we first discuss the variance on \textsc{OneMax}, for which rigorous bounds of $\omega(1)$ have been shown by~\citet{LenglerSudholtWittAlgo21} in a medium parameter regime for~$K$ (as will be defined below).
The following statement was implicitly shown in~\cite{LenglerSudholtWittAlgo21} and can be deduced from~\cite[Lemma~18]{LenglerSudholtWittAlgo21} and its proof. A discussion will follow.
\begin{theorem}\sloppy
\label{the:lower-bound-on-variance-onemax}
Consider the \cga on \textsc{OneMax} with $K = \Omega(\log^3 n)$ and $K = O(n^{1/2}/(\log(n)\log\log(n)))$.
With probability $1-e^{-\Omega(K^{1/4})}$, there exist times $t_1 = O(K^2 \log^2 n)$, $t_2 = O(K^2 \log^2(n) \log \log K)$ and $t_3 > t_2$ such that the following statements hold.
\begin{enumerate}
\item For all $t \in [t_1, t_2]$, $V_t \ge \Omega(K^{1/2})$.
\item The number of frequencies at the lower border at time $t_2$ is $\Omega(n)$. For all $t_3 > t_2$ such that there are $\Omega(n)$ frequencies at the lower border at all times in $(t_2, t_3)$, with probability $1 - O(t_3/(K^2\log^2 n)) \cdot \exp(-\Omega(K^{1/3}))$,
\[
V_{t} \in \Omega(K^{2/3}) \cap O(K^{4/3}).
\]
\end{enumerate}
\end{theorem}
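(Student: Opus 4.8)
The statement is a repackaging of the phase analysis in \citet{LenglerSudholtWittAlgo21}, so the plan is to trace through the proof of their Lemma~18 and read off the variance in each phase rather than to re-derive everything from scratch. The starting point is the dynamics of the \cga on \onemax in the medium regime: the potential $P_t$ starts at $n/2$ and drifts upward at rate $\Theta(\sqrt{V_t}/K)$ per step (the same order established for \onemax in \cite{SudholtWittAlgo19}), while each individual frequency performs a weakly biased $\pm 1/K$ random walk whose diffusion dominates its drift for $K = \bigO(\sqrt n/(\log n \log\log n))$. Consequently a constant fraction of frequencies is carried to the lower border by genetic drift before the upward bias can save them. I would define $t_1$ as the time by which $P_t$ has first reached its near-saturation level, $t_2 = \bigO(K^2\log^2(n)\log\log K)$ as the time by which $\Omega(n)$ frequencies have accumulated at the lower border, and the phase after $t_2$ as the quasi-stationary regime in which this population of border frequencies persists. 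All of these events, and the global failure probability $e^{-\Omega(K^{1/4})}$, are exactly the high-probability statements established in the reference; the task is to verify that they entail the claimed variance bounds.

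For the first statement, I would lower-bound $V_t = \sum_i p_{t,i}(1-p_{t,i})$ on $[t_1,t_2]$ by the contribution of the frequencies that are still in the interior of $[1/n,1-1/n]$. During this window the reference guarantees, with probability $1-e^{-\Omega(K^{1/4})}$, that the frequency cloud has not yet collapsed onto the borders, and the quantitative spread it maintains is exactly what yields $V_t = \Omega(K^{1/2})$. The point to extract from the proof of Lemma~18 is the simultaneous lower bound on the number of interior frequencies and on their typical distance from the nearest border, whose product gives the $K^{1/2}$ order.

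For the second statement I would analyze the quasi-stationary regime directly, via the drift of $V_t$ itself. With $\Omega(n)$ frequencies pinned near the lower border, each of them is a reflected, weakly biased random walk with step $1/K$; the balance between the upward drift $\Theta(1/(K\sqrt{V_t}))$ per active step and the diffusive spread $\Theta(1/K^2)$ determines an equilibrium cloud whose total $\sum_i p_{t,i}(1-p_{t,i})$ is confined to $[\Omega(K^{2/3}),\bigO(K^{4/3})]$. Concretely I would show that whenever $V_t$ is significantly below $K^{2/3}$ the bias per step becomes large enough that the cloud re-expands (positive drift on $V_t$), and whenever $V_t$ exceeds $K^{4/3}$ the absorption of frequencies at the border dominates (negative drift), so $V_t$ is self-correcting toward the stated interval. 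The error term $\bigO(t_3/(K^2\log^2 n))\cdot\exp(-\Omega(K^{1/3}))$ is then produced by partitioning $(t_2,t_3)$ into $\bigO(t_3/(K^2\log^2 n))$ blocks of length $\Theta(K^2\log^2 n)$, proving the confinement within one block with failure probability $\exp(-\Omega(K^{1/3}))$ via the variance Chernoff bound of Lemma~\ref{lem:hoeffding-variance}, and taking a union bound over the blocks; the hypothesis that $\Omega(n)$ frequencies stay at the border throughout $(t_2,t_3)$ is precisely what keeps the absorption term active in every block.

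The main obstacle is the self-consistency of the equilibrium analysis in the second statement: both the drift and the diffusion of each near-border frequency depend on $V_t$, so the two-sided bound is really the fixed point of a coupled recursion rather than something one reads off termwise, and pinning it to the $K^{2/3}$--$K^{4/3}$ window requires the delicate frequency-distribution estimates of \cite{LenglerSudholtWittAlgo21}. A secondary, organizational difficulty is that Lemma~18 states these facts only implicitly and under its own conditioning, so one must verify that its high-probability events chain together (via union bounds over the three phases) without the total failure probability degrading below $1-e^{-\Omega(K^{1/4})}$.
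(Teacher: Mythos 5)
Your overall plan---read the bounds off Lemma~18 of \cite{LenglerSudholtWittAlgo21}, partition $(t_2,t_3)$ into blocks of length $\Theta(K^2\log^2 n)$, and union-bound with per-block failure probability $\exp(-\Omega(K^{1/3}))$---matches how the paper justifies Theorem~\ref{the:lower-bound-on-variance-onemax}, but your account of the phases misplaces the key events and omits the central mechanism. In the actual argument, $\Omega(n)$ frequencies reach the lower border already within $\bigO(K^2)$ iterations (not at $t_2$), and $t_1$ has nothing to do with the potential $P_t$ nearing saturation (in the medium regime the optimization time is $\Omega(K^{1/3}n)$, far beyond $t_2$); rather, $t_1$ marks the end of the first application of the \emph{stabilization lemma} (Lemma~7 in \cite{LenglerSudholtWittAlgo21}), whose Part~(a) converts trivial variance bounds into $V_t=\Omega(K^{1/2})$ over the next period. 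The $\log\log K$ factor in $t_2$ is then produced by iterating Part~(b) of that lemma $\bigO(\log\log K)$ times, each period of length $\Theta(K^2\log^2 n)$ using the previous period's two-sided variance bounds to derive strictly tighter ones, until the recursion reaches its fixed point $\Omega(K^{2/3})\cap\bigO(K^{4/3})$. Your proposal never mentions this bootstrapping, yet it is the source of both the $t_2$ bound and the exponents $2/3$ and $4/3$; the per-block failure probability likewise comes from one application of the stabilization lemma, not from a single invocation of Lemma~\ref{lem:hoeffding-variance}.

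The more substantive gap is the mechanism you propose for statement~2: a state-dependent, self-correcting one-step drift of $V_t$ (positive below $K^{2/3}$, negative above $K^{4/3}$). This conflicts with the feedback structure the paper emphasizes: the effect of a small or large sampling variance is felt only at \emph{later} stages of the frequencies' random walks, which last up to $\Theta(K^2)$ steps, so the instantaneous drift of $V_t$ depends on the variance history over an entire period rather than on $V_t$ alone---the feedback loop has a considerable lag. That lag is precisely why \cite{LenglerSudholtWittAlgo21} analyzes the \emph{lifetime contribution} of each frequency's random walk under a worst-case choice of drift within a period, and why the variance bounds propagate period by period instead of step by step. A direct drift-of-$V_t$ argument in the form you sketch would not establish the claimed confinement; to repair it you would end up re-proving the period-based stabilization lemma, which is exactly the delicate content you defer to the reference. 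Since the paper itself only deduces Theorem~\ref{the:lower-bound-on-variance-onemax} from Lemma~18 and its proof, deferring is legitimate---but then your write-up must defer to the reference's actual mechanism (iterated stabilization) rather than substitute an equilibrium heuristic that the lagged dynamics do not support.
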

There is an initial phase of the first $O(K^2 \log^2n) = o(n)$ iterations for which no lower bound on the variance is shown in~\cite{LenglerSudholtWittAlgo21}.
After this phase, we have a lower bound of $\Omega(K^{1/2})$ on the variance that quickly improves to a lower bound of $\Omega(K^{2/3})$. The latter bound applies with good probability as long as there are still $\Omega(n)$ frequencies at the lower border.

We describe the main idea behind the analysis in~\cite{LenglerSudholtWittAlgo21}, and the proof of Theorem~\ref{the:lower-bound-on-variance-onemax}.
First we observe that a frequency at a border contributes only $1/n \cdot (1-1/n)$ to the variance, while frequencies that are off their borders contribute a much larger amount. Hence bounding the variance is achieved by studying the number and position of off-border frequencies.

Lemma~18 in~\cite{LenglerSudholtWittAlgo21} considers the situation after the first $O(K^2)$ iterations, when a linear number of frequencies has reached the lower border, with probability $1-e^{-\Omega(K^{1/2})}$.
Then the authors consider periods of $\Theta(K^2 \log^2 n)$ iterations and show that in a period, frequencies tend to leave their borders to perform a random walk. This random walk ends when a border is reached. (The frequency may then start another random walk during the period.) Frequencies that perform a random walk contribute a term of $p_{i, t}(1-p_{i, t})$ to the sampling variance. 
Hence the variance in future iterations can be bounded by analyzing these random walks. The dynamics are intricate since the random walks show a positive drift that depends on the current sampling variance. The drift has a potentially significant impact on the random walks; for instance, it can decide whether a random walk started at the lower border crosses the whole range $[1/n, 1-1/n]$ and ends up at the upper border, or whether it returns to the lower border.
\citet{LenglerSudholtWittAlgo21} argue that the \cga experiences a feedback loop since the current sampling variance influences future sampling variances. This feedback loop has a considerable lag as the effects of a small or large sampling variance are felt at later stages of the frequencies' random walks.

One idea from~\cite{LenglerSudholtWittAlgo21} is to assume that we have lower and upper bounds on the sampling variance during a period as this can then be used to bound the drift for the frequencies' random walks from above and below, and to establish bounds for the sampling variance in the next period. This is formalized in the so-called stabilization lemma, Lemma~7 in~\cite{LenglerSudholtWittAlgo21}, in which lower and upper bounds on the variance in one period are used to show tighter lower and upper bounds in the next period. Part (a) of Lemma~7 in~\cite{LenglerSudholtWittAlgo21} assumes trivial bounds on the sampling variance and applies after the short, initial phase of $O(K^2)$ steps, when $\Theta(n)$ frequencies have reached the lower border.
Part (a) of the stabilization lemma then yields that after a further period of $CK^2 \log^2 n$ iterations, $C > 0$ a sufficiently large constant, the variance is guaranteed to be at least $\Omega(K^{1/2})$ for the next at least period of $CK^2 \log^2 n$ steps, with probability at least $1-e^{-\Omega(K^{1/2})}$.
Lemma~18 in~\cite{LenglerSudholtWittAlgo21} then applies Part (b) of the stabilization lemma iteratively to obtain tighter lower and upper bounds.
More specifically, after $O(\log \log K)$ periods, the variance is guaranteed to be in $\Omega(K^{2/3})$ and $O(n^{4/3})$.
Since the number of frequencies at the lower border only changes very slowly (in comparison to the length of a period), we still have $\Theta(n)$ frequencies at the lower border at this point in time. While this is the case, the stabilization lemma can still be applied to show that these variance bounds are maintained. Each application of the stabilization lemma has a failure probability of $\exp(-\Omega(K^{1/3}))$, thus a union bound over $t_3/(CK^2 \log^2 n)$ applications of the lemma yields the probability bound stated in Theorem~\ref{the:lower-bound-on-variance-onemax}.

\subsection{Trying to Translate Results to Cliff}

We conjecture that Theorem~\ref{the:lower-bound-on-variance-onemax} also holds when replacing \onemax with \cliff. We do not have a proof for this statement, but we will argue why this conjecture seems plausible, and what the challenges are in translating results from~\cite{LenglerSudholtWittAlgo21} on \onemax to \cliff.

Both \cliff and \onemax are functions of unitation, hence on both functions the dynamics can be analyzed by considering individual frequencies. On both functions, frequencies are likely to reach borders within $O(K^2)$ iterations and then frequencies may detach from their borders to perform a random walk. 
Thus, the approach from~\cite{LenglerSudholtWittAlgo21} that leads to the stabilization lemma can also be applied to \cliff.

These random walks are similar for both functions. If we pick an arbitrary but fixed frequency~$i$ then, for both functions, there are steps in which that frequency has no effect on the selection of the fitter offspring and then increasing the frequency has the same probability as decreasing it. These steps were called \emph{random walk steps} in~\cite{SudholtWittAlgo19}. There are other steps, called \emph{biased steps} in~\cite{SudholtWittAlgo19}, in which a frequency can only increase on \onemax. This happens, for instance, when all other bits have the same number of ones in both samples $x$ and $y$ and then the $i$-th frequency determines which search point is reinforced. If exactly one of $x_i$ and $y_i$ is~1, that solution is reinforced and the $i$-th frequency increases.
The probability of a biased step is $\Theta(1/\sqrt{V_t})$ and the expected drift of the $i$-th frequency is $\Theta\left(\frac{p_{i, t}(1-p_{i, t})}{K\sqrt{V_t}}\right)$ if no border is hit.

On \cliff, the situation is similar. If the $i$-th frequency has no impact on selection, a random walk step occurs. Biased steps may occur when all other bits have the same number of ones on both samples or when the number of ones on all other bits is precisely $2n/3$. In the latter case, the $i$-th frequency may decide whether a sample has $2n/3$ ones (i.\,e.\ is on top of the cliff) or $2n/3 + 1$ ones (i.\,e.\ is at the bottom of the second slope). It is not difficult to show that the probability of a biased step is $\Theta(1/\sqrt{V_t})$ as for \onemax.

One key difference is that on \cliff the drift can be either positive or negative. It is positive when the \cga focuses its search on one particular slope. However, the drift can be negative when sampling close to the cliff and the two offspring lie on different slopes.

A central argument from~\cite{LenglerSudholtWittAlgo21} is that the variance can be accurately described by studying the so-called \emph{lifetime contribution} of one frequency, which is the total contribution that the frequency makes to the variance while the frequency does not reach any border. The lifetime contribution is then bounded from above and below by 
using a worst-case perspective for the drift: in each iteration, the drift may be chosen arbitrarily from a range between $0$ and a maximum value that depends on $V_t$ (cf.\ Lemma~11 and~12 in~\cite{LenglerSudholtWittAlgo21}). 
This worst-case view was necessary to deal with dependencies between frequencies and the intricate feedback loops.
For these bounds it is crucial that the drift is always non-negative. To translate this approach to \cliff, one would have to allow negative drift values in addition to positive ones. This means that sudden changes between positive and negative drift values are possible and, ultimately, the worst-case bounds on the lifetime contribution become too weak to prove that the variance stabilizes to super-constant values.
We conjecture that the worst-case view is too pessimistic here
as the real dynamics are unlikely to rapidly switch between regimes where the drift is noticeably positive and noticeably negative. Providing rigorous arguments remains a challenge for future work.


\subsection{Empirical Evidence}

\begin{figure*}[hbt]
\centering
\begin{subfigure}[b]{0.49\textwidth}
\includegraphics[width=\textwidth,height=5cm]{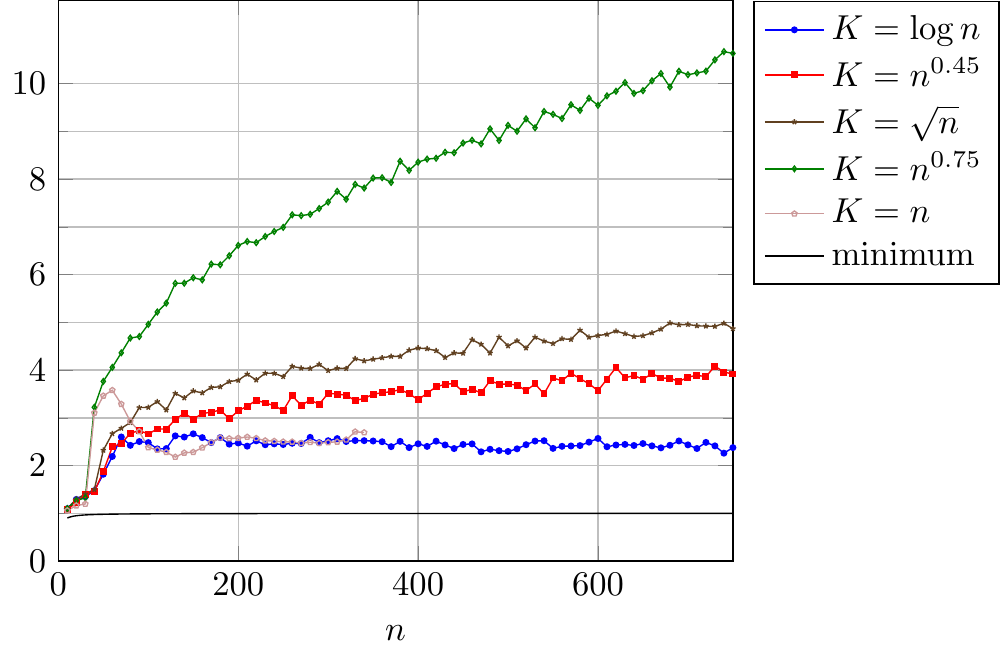}
\end{subfigure}
\hfill
\begin{subfigure}[b]{0.49\textwidth}
\includegraphics[width=\textwidth,height=5cm]{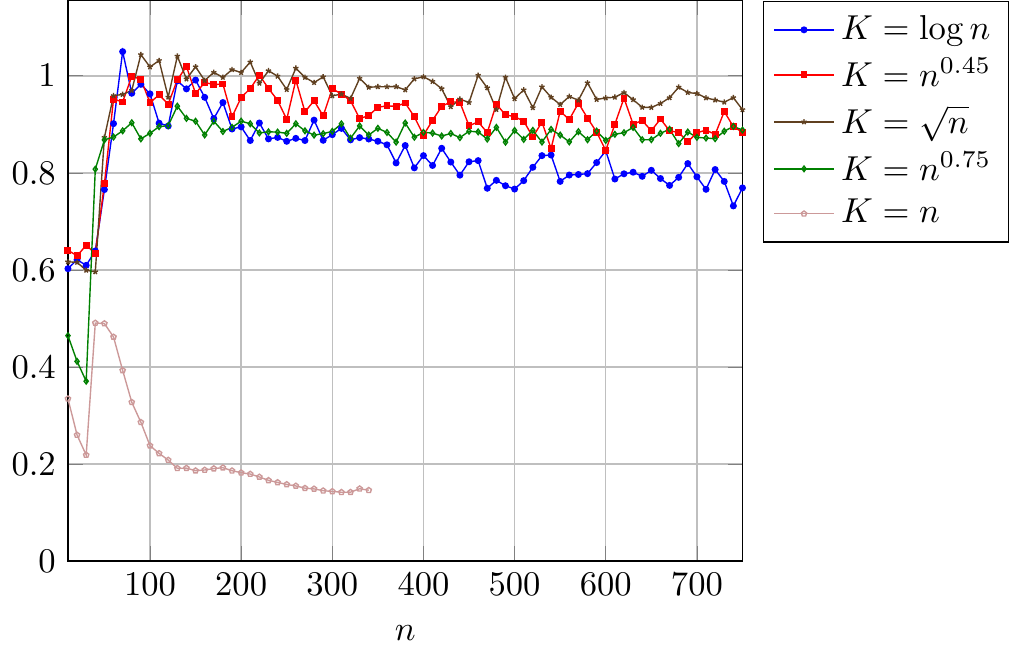}
\end{subfigure}
\vspace*{-0.9\baselineskip}
\caption{Variance after $100\sqrt{n}K+100K^2$ iterations, averaged over 100 runs, for increasing~$n$ and different values of $K$. The black line indicates the minimum variance of $1-1/n$. The plot on the right-hand side shows the same curves divided by $K^{1/2}$.}
\label{fig:variance-scaling-with-n}
\end{figure*}

We provide empirical evidence to support our belief that the variance is super-constant for interesting ranges of~$K$, including the ones from Theorem~\ref{the:lower-bound-on-variance-onemax}. We recorded the variance after $100\sqrt{n}K + 100K^2$ iterations and report averages taken over 100 runs, for increasing~$n$ and $K$ chosen as different functions of~$n$. The time bound is motivated by the upper bound of $O(\sqrt{n}K)$ for the expected optimization time of the \cga on \onemax for $K = \Omega(\sqrt{n} \log n)$~\cite{SudholtWittAlgo19} and the upper bound of $O(K^2)$ for the expected time for random walks reaching a border
(see~\cite{SudholtWittAlgo19} and Lemma~4 in~\cite{LenglerSudholtWittAlgo21}). Both results do not specify constants, hence we put a generous constant of 100 to enable the \cga to reach a state where the variance has stabilized.



The values of $K$ are chosen from $\{\log n, n^{0.45}, \sqrt{n}, n^{0.75}, n\}$. 
The value $K = n^{0.45}$ is captured by Theorem~\ref{the:lower-bound-on-variance-onemax} whereas $K=\log n$ and $K=\sqrt{n}$ are just outside the medium parameter regime. 

On the left-hand side of Figure~\ref{fig:variance-scaling-with-n} we can see the variance scaling with~$n$. For $K = \log n$, the variance seems to remain constant. Runs for $K=n$ were only performed up to $n=340$ and the variance does not seem to increase, apart from a spike for small values of~$n$. (This spike persisted when increasing the time limit to $100K^2\log^2 n$.) All other values of $K$ yield curves that have a clear upward trend. The right-hand side shows the same data, normalized by dividing by $K^{1/2}$ and for all $K$ except $K=n$, the normalized values are strikingly close to~1. All curves but $K = \log n$ appear to be stable, suggesting that a variance lower bound of $\Omega(K^{1/2})$ might apply for medium~$K$.


\section{A Semi-Rigorous Exponential Lower Bound}
\label{sec:semirigorousexpoentialbound}

As explained in Section~\ref{sec:superconstantvariance},
we believe that the sampling variance of the \cga
with potential~$P_t$ around the cliff 
is $\omega(1)$ and tends to lie stable
for
a long time. Under these
assumptions, we can formally prove that the
potential cannot efficiently cross the interval of negative
drift around $2n/3$, which was established above in
Theorem~\ref{the:negative-drift-cliff}.

The following lemma assumes a variance in
an interval $[\vl,\vu]$ while the potential is in the
drift interval $[2n/3-(\vl)^{1-\epsilon},2n/3]$. Under  
conditions on~$K$, $\vl$ and 
$\vu$, discussed after the proof, the 
time to pass the drift interval is exponential, with high probability.  

\begin{lemma}
\label{lem:app-neg-drift}
Assume that there are a constant $\epsilon'>0$, functions $\vl=\vl(n)=\omega(1)$, $\vu=\vu(n)\le n$
and a constant $c>1$
such that the property
$V_t\in [\vl,\vu]$ holds for all points in
time~$t$ where $P_t\in [a',b'] \coloneqq [2n/3-\vl^{1/2-\epsilon'},2n/3]$.
Assume $K\ge n^{\epsilon'}\vu/\vl^{1-\epsilon'}$ 
and $K=O(\sqrt{n})$ 
and define the hitting time
$T\coloneqq \min\{t\ge 0\mid P_t\ge b'\}$.
Then there
is a constant~$c'>0$ such that given $P_0\le a'$,
it holds that $\prob{T\le e^{c'K \vl^{1-\epsilon'}/\vu}}=2^{-\Omega(K 
\vl^{1-\epsilon'}/\vu)}$.
\end{lemma}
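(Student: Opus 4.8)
The plan is to apply the negative drift theorem with scaling (Theorem~\ref{theo:simplified-drift-scaling}) to the potential, feeding in the drift bound of Theorem~\ref{the:negative-drift-cliff} and the jump bound of Corollary~\ref{cor:jump-pt-onestep}. Since we want to show it is hard for $P_t$ to move \emph{up} to $b'$ while its drift points \emph{down}, I would run the theorem on the negated process $X_t := -P_t$, setting $a := -b'$ and $b := -a'$, so that $\ell = b-a = b'-a' = \vl^{1/2-\epsilon'}$ and the first time $T^\ast = \min\{t : X_t \le a\}$ from $X_0 \ge b$ equals the desired $T$ from $P_0 \le a'$. Condition~1 is then immediate: the interval $[a',b']$ is exactly the drift interval of Theorem~\ref{the:negative-drift-cliff} (with $\vl$ playing the role of the variance lower bound), and whenever $P_t\in[a',b']$ we have $V_t\ge\vl$, so $\expect{X_{t+1}-X_t\mid\filtt\,;\,a<X_t<b} = -\expect{\Delta_t\mid P_t} = \Omega(\sqrt{V_t}/K) = \Omega(\sqrt{\vl}/K)$, whence $\epsilon = \Theta(\sqrt{\vl}/K)$.

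For condition~2, while $P_t\in[a',b']$ we have $V_t\le\vu$, so Corollary~\ref{cor:jump-pt-onestep} gives $\prob{\card{X_{t+1}-X_t}\ge\lambda/K}\le 2e^{-(1/3)\min\{\lambda^2/\vu,\lambda\}}$; a short calculation shows that $r_0 := c_1\sqrt{\vu}/K$ for a large enough constant $c_1$ yields $\prob{\card{X_{t+1}-X_t}\ge j r_0}\le e^{-j}$ for all $j\in\N_0$, where the $\min$ is dominated by $\lambda^2/\vu$ for small $j$ and by $\lambda$ for large $j$ (using $\vu=\omega(1)$). The subtlety is that $V_t\le\vu$ is only assumed inside $[a',b']$, whereas the theorem asks for the jump bound on all of $\{X_t>a\}$. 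I would handle this by applying the theorem to the process stopped upon first leaving a region slightly below $a'$, and by controlling the only dangerous event separately: a single step spanning the whole interval, $\card{P_{t+1}-P_t}\ge\ell=\vl^{1/2-\epsilon'}$, has probability $2e^{-\Omega(\vl^{1/2-\epsilon'}K)}$ by the corollary with the trivial bound $V_t\le n$ and $K=\Omega(n^{\epsilon'}\vl^{\epsilon'})$. A union bound over the claimed time horizon shows this is negligible, since $\vl^{1/2-\epsilon'}K$ exceeds $K\vl^{1-\epsilon'}/\vu$ by a factor $\vu/\sqrt{\vl}\ge\sqrt{\vl}=\omega(1)$.

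The main obstacle is condition~3, $1\le r^2\le\epsilon\ell/(132\log(r/\epsilon))$, because the natural jump scale $r_0=c_1\sqrt{\vu}/K$ can be far below $1$ (e.g.\ when $\vu\ll K^2$), so $r_0^2\ge 1$ may fail. I would resolve this by exploiting scale-invariance: both the conclusion exponent $\epsilon\ell/(132 r^2)$ and the \emph{upper} bound in condition~3 are unchanged when the process is rescaled, whereas the lower bound $r^2\ge1$ is not. Rescaling $X_t$ by $u := \sqrt{\vu}/K$ turns the jump scale into the constant $\tilde r = r_0/u = c_1$, the drift into $\tilde\epsilon = \Theta(\sqrt{\vl/\vu})$, and the interval length into $\tilde\ell = \Theta(K\vl^{1/2-\epsilon'}/\sqrt{\vu})$. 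Then $\tilde r^2=\Theta(1)\ge1$, and for the upper bound I would compute $\tilde\epsilon\tilde\ell = \Theta(K\vl^{1-\epsilon'}/\vu)$ and $\log(\tilde r/\tilde\epsilon)=O(\log(\vu/\vl))=O(\log n)$, so that the hypothesis $K\ge n^{\epsilon'}\vu/\vl^{1-\epsilon'}$ (equivalently $K\vl^{1-\epsilon'}/\vu\ge n^{\epsilon'}$) gives $\tilde\epsilon\tilde\ell/\log(\tilde r/\tilde\epsilon)=\Omega(n^{\epsilon'}/\log n)=\omega(1)$, which dominates the constant $\tilde r^2$ for large $n$; here $K=O(\sqrt{n})$ keeps the regime consistent and the log-term of order $\log n$.

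Finally, Theorem~\ref{theo:simplified-drift-scaling} yields $\prob{T^\ast\le e^{\tilde\epsilon\tilde\ell/(132\tilde r^2)}}=O(e^{-\tilde\epsilon\tilde\ell/(132\tilde r^2)})$, and since $\tilde\epsilon\tilde\ell/(132\tilde r^2)=\Theta(K\vl^{1-\epsilon'}/\vu)$ this is exactly $\prob{T\le e^{c'K\vl^{1-\epsilon'}/\vu}}=2^{-\Omega(K\vl^{1-\epsilon'}/\vu)}$ for a suitable constant $c'>0$, as claimed. I expect the rescaling step, together with the bookkeeping needed to ensure it produces the factor $\vu$ (rather than the weaker $n$ that the crude variance bound would give) in the denominator of the exponent, to be the crux of the argument.
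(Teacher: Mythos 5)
Your main line is essentially the paper's own proof. The paper applies Theorem~\ref{theo:simplified-drift-scaling} to $X_t=-KP_t$ with $a=-Kb'$, $b=-Ka'$, i.e.\ it rescales by the factor~$K$ instead of your $K/\sqrt{\vu}$, obtaining $\epsilon=c_1\sqrt{\vl}$, $\ell=K\vl^{1/2-\epsilon'}$ and $r=c_2\sqrt{\vu}$; the lower bound $r^2\ge 1$ of condition~3 then holds since $\sqrt{\vu}\ge\sqrt{\vl}\ge 1$. As you correctly observe, the exponent $\epsilon\ell/(132r^2)$ and the upper bound in condition~3 are invariant under rescaling, so your normalization with $\tilde r=\Theta(1)$ is an equivalent way of exploiting the same freedom, and your verification of conditions~1--3, including $\log(r/\epsilon)=O(\log n)$ via $\vu\le n$ and the exponent $\Omega(n^{\epsilon'})$ via $K\ge n^{\epsilon'}\vu/\vl^{1-\epsilon'}$, matches the paper step for step.

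The one place where you go beyond the paper---handling condition~2 for states with $P_t<a'$, where the hypothesis gives no variance bound---contains a genuine quantitative error. Corollary~\ref{cor:jump-pt-onestep} with $\lambda=K\vl^{1/2-\epsilon'}$ gives the exponent $(1/3)\min\{\lambda^2/V_t,\lambda\}$, and the minimum is attained at $\lambda$ only if $\lambda\ge V_t$. Since $K=O(\sqrt{n})$ and $\vl\le n$, we have $\lambda=O(n^{1-\epsilon'})$, so whenever $V_t$ is close to~$n$ (exactly what you cannot exclude below $a'$) the exponent is $\lambda^2/V_t\le K^2\vl^{1-2\epsilon'}/n\cdot(n/V_t)=O(\vl^{1-2\epsilon'})$ rather than the claimed $\Omega(K\vl^{1/2-\epsilon'})$; your comparison argument bounds $\lambda$ itself instead of the minimum. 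Concretely, for $\vl=\vu=\Theta(\log^2 n)$ and $K=\Theta(\sqrt{n})$, which satisfy all hypotheses of the lemma, the per-step no-big-jump guarantee is only $e^{-\Theta(\log^{2-4\epsilon'}n)}$, while the time horizon $e^{c'K\vl^{1-\epsilon'}/\vu}$ is $e^{\Theta(\sqrt{n}/\log^{2\epsilon'}n)}$, so the union bound fails by an exponential margin (and the ``stopped slightly below $a'$'' device would additionally need care about re-entries into the interval). For what it is worth, the paper does not resolve this subtlety inside the lemma either: its proof silently applies condition~2 with $r=c_2\sqrt{\vu}$ for all $P_t<b'$, implicitly extending the variance bound beyond $[a',b']$, and it treats the jump-over event in a separate paragraph after the lemma using the threshold $(V_t)^{1/2-\epsilon}/4$ that scales with the \emph{current} variance, so that $\lambda^2/V_t=\Theta(K^2/V_t^{2\epsilon})=\Omega(K)$ for $K\ge n^{2\epsilon}$; it is this self-scaling threshold, not a fixed threshold $\vl^{1/2-\epsilon'}$, that makes the computation go through.
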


\begin{proof}
We verify the three conditions of the negative
drift theorem with scaling (Theorem~\ref{theo:simplified-drift-scaling}). 
Its parameters are chosen as $X_t=-KP_t$, $a=-Kb'$ and $b=-Ka'$.

For the first item, which deals with
a lower bound on the drift, we use
Theorem~\ref{the:negative-drift-cliff}  to  obtain (for $X_t=-KP_t$) the drift bound 
$\epsilon = \epsilon(n) = c_1 \sqrt{\vl}$ for 
a constant~$c_1>0$ within the
interval $[a,b]$ of length $\ell=b-a=K\vl^{1/2-\epsilon'}$.

To verify the second condition, we use
Corollary~\ref{cor:jump-pt-onestep} dealing
with the concentration of the one-step
change $\card{P_{t+1}-P_t}$ depending
on  the variance. Since 
we apply it to the scaled process $X_t=-KP_t$, the parameter~$\lambda$ is implicitly multiplied by~$K$. Hence,
there is an $r = c_2 \sqrt{\vu}$
for some sufficiently large constant~$c_2>0$ such that
    \[
    \prob{\lvert X_{t+1}-X_t\rvert\ge jr \mid \filt  \,;\, a< X_t}  \le  2e^{-\min\{ j^2 c_2^2, j c_2\sqrt{\vu} \}/3} \le e^{-j}
    \]
    for $j\in \N_0$, using that $\sqrt{\vu}\ge \sqrt{\vl}\ge 1$. Note that the condition
    $a<X_t$ is equivalent to
    $P_t<b'$ and $\lvert X_{t+1}-X_t\rvert = \lvert K(P_{t+1}-P_t)\rvert$.

We now analyze the exponent in the final bound on~$T$.  We have $r^2 = c_2^2 \vu$  and therefore
\[
\frac{\epsilon \ell}{132r^2} \ge
\frac{c_1  \sqrt{\vl} K\vl^{1/2-\epsilon'}}{132c_2^2 \vu  }
= \frac{c_1 K \vl^{1-\epsilon'}}{132c_2^2 \vu},
\]
which immediately leads to the exponent claimed
in the statement of this lemma
by setting $c'=c_1/(132c_2^2)$. Note that
the exponent becomes $\Omega(n^{\epsilon'})$
if $K\ge n^{\epsilon'}\vu/\vl^{1-\epsilon'}$.
However, we still have to verify the
third condition of the drift theorem.

First of all, we have $r=c_2\sqrt{\vu}\ge 1$ by 
choosing $c_2$ large enough and thus satisfy 
the lower bound on~$r^2$ in the third condition. Next, we note that $r/\epsilon\le \frac{c_2 \sqrt{\vu}}{c_1\sqrt{\vl}} = O(\sqrt{n})$ since $\vu\le n$  and therefore $\log(r/\epsilon)=O(\log n)$.
Hence, we can use the bound $\frac{\epsilon \ell}{132r^2} = \Omega(n^{\epsilon'})$, assuming  
 $K\ge n^{\epsilon'}\vu/\vl^{1-\epsilon'}$ and $K=O(\sqrt{n})$,  
from the previous paragraph to show that
$\frac{\epsilon \ell}{r^2 132 \log(r/\epsilon)}\ge 1$, 
which is equivalent to the upper bound on~$r^2$ 
in the third condition of the drift theorem. 
\end{proof}

To apply Lemma~\ref{lem:app-neg-drift}, we need 
bounds on the variance while $P_t$ is in the drift interval.
From the discussions above, we conjecture that the variance
stabilizes around a value~$v^*=\Omega(K^{1/2})$, which would allow
us to have $\vu/\vl=\Theta(1)$, hence $K\vl^{1-\epsilon'}/\vu =K(\vl/\vu)^{1-\epsilon'} /\vu^{\epsilon'} = 
\Omega(Kn^{-\epsilon'})$ (since $\vu\le n$). In this case, we
obtain an exponential bound already
for $K=\Omega(n^{2\epsilon'})$. If the variance is
allowed to oscillate
between $\vl$ and $\vu$ that are not of the
same asymptotic order, we can still apply the lemma
under reasonable assumptions.  If the variance is allowed 
to oscillate between a lower bound 
$\vl$ and an upper bound 
$\vu$ such that $\vu/\vl\le K^{1-\delta}$
for a constant $\delta>0$ (like, 
\eg, with the bounds $\Omega(K^{2/3})$ 
and $O(K^{4/3})$ appearing in 
Theorem~\ref{the:lower-bound-on-variance-onemax}), 
then 
$
{K \vl^{1-\epsilon'}}/{\vu} \ge
({K }/{K^{1-\delta}
)\vu^{-\epsilon'}} = K^{\delta}/\vu^{\epsilon'} 
\ge K^{\delta}/n^{\epsilon'}$,
%
which is still polynomially growing in~$K$ 
if, \eg, $K\ge n^{2\epsilon'/\delta}$. Here $\epsilon'>0$ can 
be chosen arbitrarily small.

\begin{figure*}[t]
\centering
\begin{subfigure}[b]{0.49\textwidth}
\includegraphics[width=\textwidth,height=5cm]{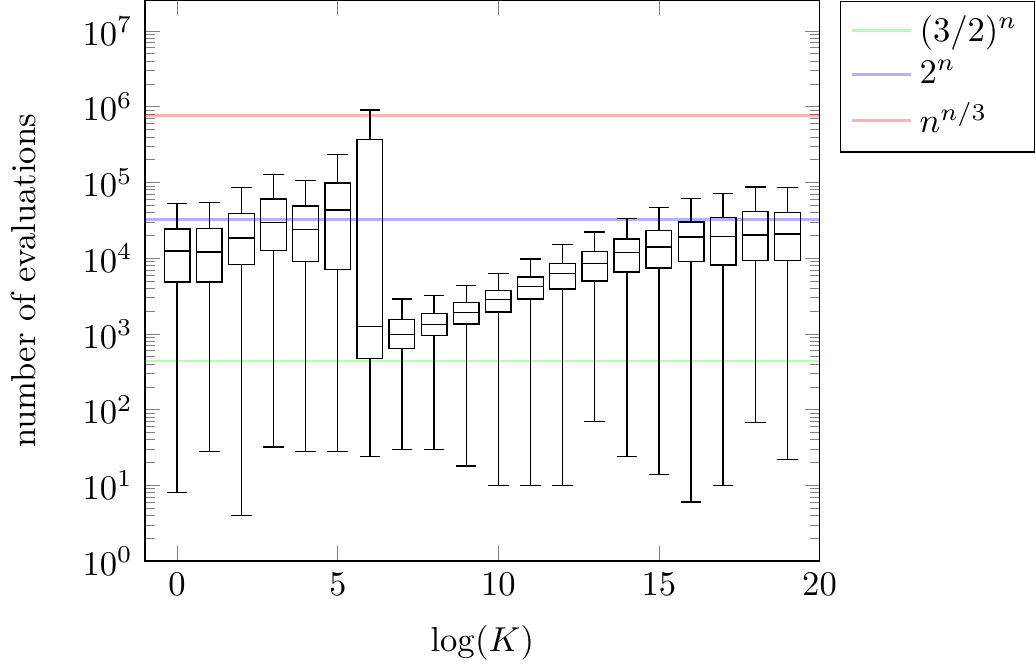}
\end{subfigure}
\hfill
\begin{subfigure}[b]{0.49\textwidth}
\includegraphics[width=\textwidth,height=5cm]{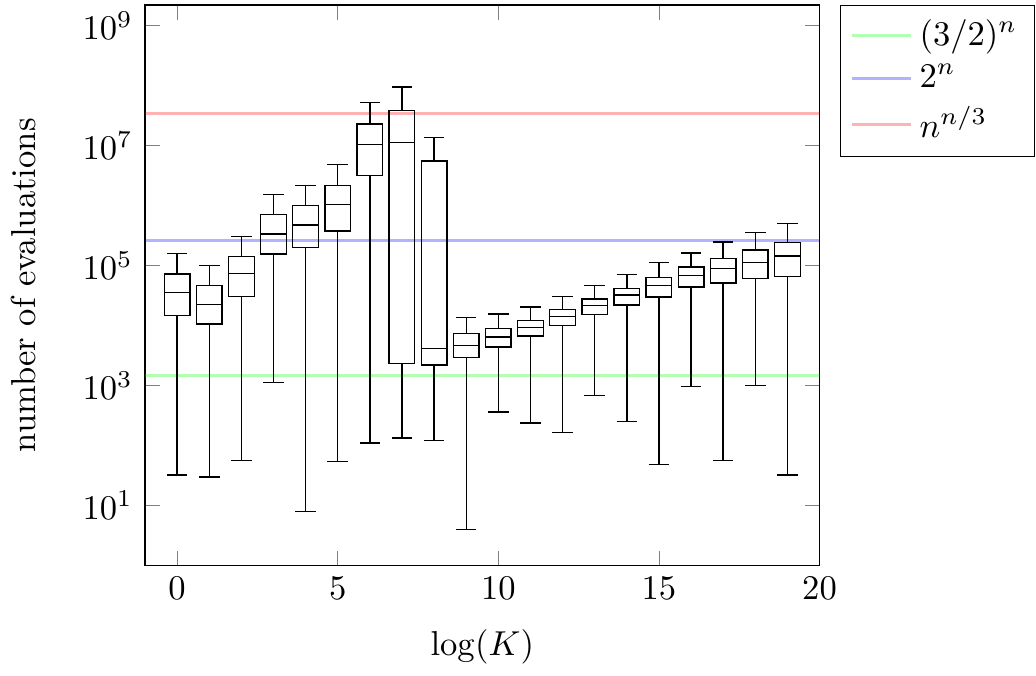}
\end{subfigure}
\vspace*{-0.9\baselineskip}
\caption{Box plots for the number of evaluations in 1000 runs on \cliff with $n=15$ (left) and $n=18$ (right) and exponentially increasing values of~$K$. The plots also show values of $(3/2)^n$, $2^n$ and $n^{n/3}$ for comparison.}
\label{fig:runtime-boxplots}
\end{figure*}

\paragraph{The potential does not jump over the negative drift interval.} To rule out that the \cGA optimizes \cliff efficiently, we also have to prove it unlikely
that the potential changes drastically
in one step and ``jumps over the drift interval'' $[2n/3-V_t^{1/2-\epsilon},2n/3]$.
However, it is not difficult to prove that the following event
is unlikely:
there is a point of time~$t$ where
$P_t<2n/3-(V_t)^{1/2-\epsilon}$
but $P_{t+1}>2n/3$. Since
$V_{t+1}\ge (1-1/K)V_t \ge V_t/2$, the length of the drift interval stemming from Theorem~\ref{the:negative-drift-cliff} is
at most halved in the transition from time~$t$ to~$t+1$. Hence,
if $P_{t+1}\le P_t +
(V_t/2)^{1/2-\epsilon}/2$, then $P_{t+1}\le 2n/3-(V_{t+1})^{1/2-\epsilon}/2$, \ie,
the process has not jumped over the interval. For simplicity,
we work
with the lower bound $(V_t)^{1/2-\epsilon}/4$ on the potential
difference in the following. We can then easily apply Corollary~\ref{cor:jump-pt-onestep} to
show that the variance does not change by
at least $(V_t)^{1/2-\epsilon}/4$ in a step
with overwhelming probability. We choose $\lambda=
K (V_t)^{1/2-\epsilon}/4$ and obtain
a failure probability of $e^{-\Omega(\min\{\lambda^2/V_t,\lambda\})} = e^{-\Omega(\min\{K^2/V_t^{2\epsilon},K V_t^{1/2-\epsilon}\})}$.
If we have  $K\ge n^{2\epsilon}$ then the
probability of increasing the potential
by $V_t^{1/2-\epsilon}/4 \le (V_t/2)^{1/2-\epsilon}/2$ in one step has probability $e^{-\Omega(K)}$, using $V_t\le n$.

Altogether, the analyses presented in this section show
that the potential, under reasonable assumptions on the
sampling variance, takes exponential time to exceed the
value~$2n/3$. If $P_t\le 2n/3$, sampling the optimum of
cliff, \ie, the all-ones string,
has probability at most $(3/4)^{n/9} =  2^{-\Omega(n)}$
since at least $n/9$
frequencies have to be below~$3/4$. Hence, the optimum
is not sampled in this situation with overwhelming probability.
%
We have proven 
the following theorem.

\begin{theorem}
Assume the setting of Lemma~\ref{lem:app-neg-drift}. 
Then the optimization time of the \cGA on \cliff is 
$2^{\Omega(n^{\epsilon'})}$ with probability 
$1-2^{-\Omega(n^{\epsilon'})}$.
\end{theorem}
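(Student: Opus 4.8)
The plan is to combine the two ingredients already assembled in this section, namely the negative drift established in Lemma~\ref{lem:app-neg-drift} and the observation that the potential cannot jump over the drift interval, and then translate a statement about the potential into a statement about the optimization time. First I would invoke Lemma~\ref{lem:app-neg-drift} directly: under the assumed variance bounds and the constraints on~$K$, with $P_0\le a'=2n/3-\vl^{1/2-\epsilon'}$ the hitting time $T=\min\{t\ge 0\mid P_t\ge b'\}$ to reach the top of the drift interval satisfies $\prob{T\le e^{c'K\vl^{1-\epsilon'}/\vu}}=2^{-\Omega(K\vl^{1-\epsilon'}/\vu)}$. Since $\vl=\omega(1)$, $\vu\le n$ and $K\ge n^{\epsilon'}\vu/\vl^{1-\epsilon'}$, the exponent $K\vl^{1-\epsilon'}/\vu$ is $\Omega(n^{\epsilon'})$, so this already gives that the potential needs time $2^{\Omega(n^{\epsilon'})}$ to cross the drift interval, except with probability $2^{-\Omega(n^{\epsilon'})}$.

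Next I would verify that the process really has to traverse the drift interval rather than jumping across it in a single step, using the ``jumping over'' argument made just above the theorem. Here I would appeal to Corollary~\ref{cor:jump-pt-onestep}: choosing $\lambda=K(V_t)^{1/2-\epsilon'}/4$ shows that, in any single iteration, the probability of increasing the potential by more than $(V_t)^{1/2-\epsilon'}/4$ is $e^{-\Omega(\min\{K^2/V_t^{2\epsilon'},KV_t^{1/2-\epsilon'}\})}=e^{-\Omega(n^{\epsilon'})}$ when $K\ge n^{2\epsilon'}$ and $V_t\le n$. Because $V_{t+1}\ge(1-1/K)V_t\ge V_t/2$, the drift interval can shrink by at most a factor~$2$ per step, so a step whose potential increase is bounded by $(V_t)^{1/2-\epsilon'}/4$ cannot carry the process from below $2n/3-(V_t)^{1/2-\epsilon'}$ to above $2n/3$. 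A union bound over the $e^{c'K\vl^{1-\epsilon'}/\vu}$ relevant iterations keeps the total failure probability at $2^{-\Omega(n^{\epsilon'})}$, so \whp the potential stays at most $2n/3$ throughout this phase.

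Finally I would convert ``$P_t\le 2n/3$'' into ``the optimum is not sampled.'' Whenever $P_t\le 2n/3$, at least $n/9$ of the frequencies must be at most~$3/4$ (otherwise the potential would exceed $(8/9)\cdot(3/4)\cdot n+\dots>2n/3$), and each such frequency independently produces a one-bit with probability at most~$3/4$. Hence the probability that a single offspring equals $1^n$ is at most $(3/4)^{n/9}=2^{-\Omega(n)}$, and a further union bound over the $2^{\Omega(n^{\epsilon'})}$ iterations shows that the all-ones string is sampled with probability only $2^{-\Omega(n)}\cdot 2^{\Omega(n^{\epsilon'})}=2^{-\Omega(n)}$ during this phase. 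Collecting the three failure events by a union bound yields that the optimization time is $2^{\Omega(n^{\epsilon'})}$ with probability $1-2^{-\Omega(n^{\epsilon'})}$, as claimed.

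The main obstacle I anticipate is bookkeeping the union bounds so that all three failure probabilities remain $2^{-\Omega(n^{\epsilon'})}$ simultaneously, since the number of iterations we union over, $e^{c'K\vl^{1-\epsilon'}/\vu}$, is itself $2^{\Omega(n^{\epsilon'})}$; I would need the per-step failure probabilities to decay faster than this count grows, which is exactly why the per-step bounds were driven down to $e^{-\Omega(n^{\epsilon'})}$ with room to spare. A secondary subtlety is ensuring the variance hypothesis of Lemma~\ref{lem:app-neg-drift} holds precisely on the event that $P_t$ stays inside the drift interval, rather than unconditionally, but since the lemma is stated conditionally on exactly that event, no extra work beyond careful invocation should be required.
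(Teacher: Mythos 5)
Your proposal is correct and follows essentially the same route as the paper, which proves this theorem by exactly the three ingredients you assemble: Lemma~\ref{lem:app-neg-drift} for the exponential crossing time, the Corollary~\ref{cor:jump-pt-onestep}-based argument that the potential cannot jump over the drift interval, and the bound $(3/4)^{n/9}=2^{-\Omega(n)}$ on sampling the optimum while $P_t\le 2n/3$. Your only addition is making the union-bound bookkeeping over the $2^{\Omega(n^{\epsilon'})}$ iterations explicit (using $\epsilon'<1/2$ and that the per-step failure probabilities $e^{-\Omega(K)}$ and $2^{-\Omega(n)}$ decay faster than the iteration count grows), which the paper leaves implicit.
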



\section{Empirical Runtimes \& Open Problems}
\label{sec:experiments}

Figure~\ref{fig:runtime-boxplots} shows boxplots highlighting the median runtimes and their distributions for $n=15$ and $n=18$ and $K$ being set to a power of~2 from $1$ to $2^{19}$. Note that the $y$ axis uses a logarithmic scale.

As can be seen, the parameter $K$ has a significant impact on the runtime and its parameter landscape seems complex.
For both problem sizes, the median runtime for small $K$ is close to~$2^n$.
We suspect that, for extreme updates, the \cga shows a chaotic behavior resembling random search (cf.\ Theorem~17 in~\cite{LenglerSudholtWittAlgo21} for \onemax). This is caused by extreme genetic drift~\cite{SudholtWittAlgo19,DoerrZhengIEEETEC20}.

As $K$ grows, the median runtime increases considerably, with some runs exceeding $n^{n/3}$ evaluations. For medium values of~$K$, the frequencies tend to reach their borders quickly due to genetic drift. If most of the frequencies remain at their borders and the potential is close to $2n/3$, roughly $n/3$ frequencies must be at the lower border and the probability of sampling the optimum from there is at most $n^{-n/3}$. 

This regime is followed by a sudden and steep drop at $K=2^6 = 64$ for $n=15$ and $K=2^8 = 256$ for $n=18$, respectively. When doubling $K$ one more time, the distribution is highly concentrated around the median, and runtimes are close to $(3/2)^n$. 
We suspect that in this parameter range, where $K$ is exponential in~$n$, the frequencies increase slowly and evenly. When the potential reaches $2n/3$ and all frequencies are similar to $2/3$, the \cga would have a probability of roughly $(2/3)^n$ to sample the optimal solution. 
This would explain the phase transition where we suspect the expected optimization time to drop from $n^{\Omega(n)}$ to $2^{O(n)}$ and possibly even to at most $c^n \cdot \mathrm{poly}(n)$ for a constant $3/2 \le c < 2$. 

When increasing $K$ even further, the frequencies remain so close to their initial values of $1/2$ that the \cga behaves like random search again. Indeed, the median runtime seems to approach $2^n$ for the largest values of~$K$ examined.

So, the best choice of $K$ seems to be in the sweet spot where the frequencies are able to rise equally towards a potential of $2n/3$ and stay there for long enough to sample the optimum. 

We finish with some open problems related to these observations.
\begin{openproblem}
Sharpen the variance bounds from~\cite{LenglerSudholtWittAlgo21} on \onemax,  Theorem~\ref{the:lower-bound-on-variance-onemax}, and add variance bounds for times in $[0, t_1]$.
\end{openproblem}

\begin{openproblem}
Prove rigorously that for the \cga on \cliff, with high probability, the variance is super-constant throughout an exponential period of time, for appropriate update strengths ${K = n^{\Omega(1)}}$.
\end{openproblem}

\begin{openproblem}
Prove a lower bound of $n^{\Omega(n)}$ for the \cga on \cliff for appropriate values of~$K$ below the observed phase transition.
\end{openproblem}

\begin{openproblem}
\label{problem:c-to-n-bound}
Prove an upper bound of $c^n \cdot \mathrm{poly}(n)$ for a constant $c < 2$ for exponential~$K$ beyond the observed phase transition.
\end{openproblem}

In order to address the open problem~\ref{problem:c-to-n-bound}, it might be necessary to prove that the frequencies tend to increase evenly. 
\begin{openproblem}
\label{problem:even-frequencies}
Prove that, for the \cga on \onemax or \cliff with large values of~$K$, the frequencies tend to increase evenly from their initial value of~$1/2$, and that they remain concentrated around the expectation for a period of time.
\end{openproblem}
Solving open problem~\ref{problem:even-frequencies} may not be as easy as it looks. Many standard concentration bounds do not apply since the frequencies are not independent and each step may have a large knock-on effect on future frequency dynamics, as discussed for the powerful method of bounded martingale differences in~\cite[Section~10.3]{DoerrProbabilisticTools}.

\section*{Acknowledgments}

This work has been supported by the Australian Research Council (ARC) through grant FT200100536 
and by the Independent Research Fund Denmark 
through grant DFF-FNU  8021-00260B.

\balance
\bibliographystyle{abbrvnat}
\bibliography{eda,references}

\clearpage

\end{document}